\newtheorem{theorem}{Theorem}
\newtheorem{lemma}{Lemma}
 \tikzset{>=latex} 
 \colorlet{mydarkblue}{blue!30!black}
 \colorlet{mydarkred}{red!30!black}
 \pgfplotsset{compat=1.12} 
\begin{document}



\title{Robust Federated Personalised Mean Estimation for the Gaussian Mixture Model
\thanks{MM and VP acknowledge support of the Department of Atomic Energy, Government of India, under project no. RTI4001.  The work of SD was supported in part by NSF grants 2139304, 2146838}
}

\author{\IEEEauthorblockN{Malhar A. Managoli}
\IEEEauthorblockA{\textit{School of Technology \& Comp. Sc.} \\
\textit{Tata Institute of Fundamental Research}\\
Mumbai, India \\
malhar.managoli@tifr.res.in}
\and

\IEEEauthorblockN{Vinod M. Prabhakaran}
\IEEEauthorblockA{\textit{School of Technology \& Comp. Sc.} \\
\textit{Tata Institute of Fundamental Research}\\
Mumbai, India \\
vinodmp@tifr.res.in}
\and
\IEEEauthorblockN{Suhas Diggavi}
\IEEEauthorblockA{\textit{Department of Electrical Engineering} \\
\textit{University of California}\\
Los Angeles, USA \\
suhas@ee.ucla.edu}
}

\maketitle

\begin{abstract}
     Federated learning with heterogeneous data and personalization has received significant recent attention. Separately, robustness to corrupted data in the context of federated learning has also been studied. In this paper we explore  combining personalization for heterogeneous data with robustness, where a constant fraction of the clients are corrupted. Motivated by this broad problem, we formulate a simple instantiation which captures some of its difficulty. We focus on the specific problem of personalized mean estimation where the data is drawn from a Gaussian mixture model. We give an algorithm whose error depends almost linearly on the ratio of corrupted to uncorrupted samples, and show a lower bound with the same behavior, albeit with a gap of a constant factor.
\end{abstract}


\section{Introduction}

 Federated learning (FL) is a distributed system approach to collaboratively build machine learning models from multiple clients,   without directly sharing limited local data \cite{mcmahan2017communication,kairouz2019advances}. There are two challenges. One is that the local data could be statistically heterogeneous, which means that we need to build individual (personalized) models suitable for the local statistics, and therefore the challenge is to collaborate across clients with such heterogeneous data. The second is that, since the clients are distributed, some of them could be compromised or be malicious and therefore can disrupt the collaboration. Therefore, the fundamental questions is how to build robust learning methods despite heterogeneity.

In this broad problem, we focus on robust distributed personalized mean estimation for heterogeneous data where the client wants to estimate the mean of its underlying distribution. The general problem is challenging, and so we formulate and study a specific case that captures several of the important and challenging aspects. For concreteness we {\sf (i)} study a specific heterogeneity model, based on Gaussian mixture model, {\sf (ii)} use the limited local data as ``verified'' samples for the clients to combine them with collaboration adaptively, {\sf (iii)} devise robust filtering algorithms cognizant of the statistical model albeit without knowing its parameters {\sf (iv)} formulating a performance criterion for which we give analytical guarantees of performance for a chosen Byzantine model.

 Specifically in our robust personalized mean estimation, the data is drawn from a Gaussian mixture model and each client is interested in estimating the mean of the mixture component their data belongs to. We consider an adversary who can overwrite the data of a fraction of the clients. A server collates  data from all the clients to provide information which may assist the clients in their task. Each (uncorrupted) client combines this information with their own limited, but ``verified'' local data to produce their estimates. Our goal is to understand the value of using the data of other clients which, while heterogeneous and partially corrupted, may be plentiful; we focus specifically on the case of large number of clients and put no limits on the client-server communication. We formulate a min-max problem in which the objective is to minimize the mean squared error of the average uncorrupted client's personalized estimate maximized over the model and the actions of the adversary.

One difficulty of being robust in the heterogeneous setting is that even when the corrupted clients constitute a small fraction of the total number of clients, they can be more numerous than one of the components of the mixture.
Any one client (or even an entire component) can be made to achieve no benefit from the collaboration. However, in this case most clients get significant benefits by collaborating.
Therefore, we try to minimise the error incurred by the average client. And we show that with respect to this error, collaboration is almost always beneficial. Another challenge is handling variations induced by the adversary which maybe indistinguishable from natural variations (due to heterogeneity). Furthermore, in forming the personalized estimate, the local samples (which are known to be uncorrupted) need to be judiciously combined with information obtained from samples of other users (which are partially corrupted).

\noindent\textbf{Contributions:} (see Sections \ref{sec:MainRes}, \ref{sec:LowerBnd} and \ref{sec:AlgoAnalysis}). 
\begin{itemize}
\item We obtain a lower bound on the error that any algorithm must incur (see Theorem \ref{thm:lowerbound}). We do this by considering a relaxation where the algorithm also has access to component labels for the samples (including those of the adversary). This effectively reduces the problem to several single component problems for which we design and analyse an appropriately chosen attack (see Section \ref{sec:LowerBnd}). 
\item We propose a filter-then-cluster algorithm for robust personalized mean estimation.
The algorithm has several broad components. First, the filtration and clustering steps are designed so that each corrupted data point can only affect one component (under a minimum separation guarantee). 
Second, the algorithm gives estimates for the mean and the quality of the mean estimate, for each component. This allows the clients to know how much to rely on the server's estimate and judiciously combine it with its local uncorrupted samples to form its estimate. 
\item We prove an asymptotic upper bound on the error incurred by the algorithm, matching the lower bound to within a constant factor 
We do this by first showing that each corrupted data point can affect only one component, and then by analysing the single component version of the problem (this was inspired by the lower bound).
For analysing the single component problem, we give bounds on the errors in server's mean estimate and quality estimate. Based on these we bound the error in the client's final estimate.
\end{itemize}

\noindent\emph{Related work:} Classical works on robust statistics~\cite{tukey1960survey,huber1964robust} initiated the study of designing estimators which are robust to some of the data being corrupted. Recent works have proposed efficient algorithms for high-dimensional robust estimation~\cite{diakonikolas2019robust,lai2016agnostic,charikar2017learning,}. A related line of investigation considers learning from batches of data where a fraction of batches are corrupted by an adversary~\cite{qiao2018learning,jain2020optimal,chen2020learning,jain2021robust}, however this is mainly for the homogeneous case where the samples are drawn from the same underlying distribution. Ghosh et al.~\cite{ghosh2019robust} considered robust federated learning in the heterogeneous setting. However, their objective is different from ours; their personalisation is on the component level, not at the client level. Also, they try to minimise the error of the worst component. In another line of work (see \cite{DD-ICML-2021,data2023byzantine} and references therein), robust mean estimation with heterogeneous Byzantine clients is studied. The differences are that they considered overall mean estimation rather than personalized for each client, and the heterogeneity model was based on gradient function properties. There is also extensive  work on personalized federated learning (see \cite{OGDD-ICLR-2023} and references therein), where the problem is to build individual learning models, through collaboration, but most of these works are without Byzantine adversaries. 

\section{Problem Statement}

Consider the following $k$-component univariate Gaussian mixture distribution $\mathcal{G}=\frac{1}{k} \sum_{i\in[k]} \mathcal{N}(\mu_i,1)$ with equal mixture weights, where $\mu_i\in{\mathbb R}, i\in[k]:=\{1,\ldots,k\}$ are the unknown means of the components. The variances are identical and assumed to be known (these are set to unity without loss of generality). Clients independently observe a single realization each\footnote{Or equivalently (up to a scaling), each client makes the same number of independent observations whose sample average may serve as a sufficient statistic.} according to $\mathcal{G}$ and are interested in estimating the mean of the component they belong to. In order to do this, they may take advantage of the observations of the other clients through a server to whom all clients report their observations\footnote{In this short paper, we do not impose any constraints on communication between the clients and the server.}. However, an adversary corrupts a subset of the clients whose identity is unknown to the uncorrupted clients and the server, which is assumed to be uncorrupted. We consider an adversary who is not privy to the observations of the uncorrupted clients, but knows the component means exactly (note that these means may at least be estimated from the observations of the corrupt clients; we assume here that the adversary knows them exactly). The adversary may replace the observations of the corrupted clients with any value of its choosing. This is essentially the non-adaptive, additive contamination model of~\cite{huber1964robust}. We study a min-max formulation where the goal of the server and (uncorrupted) clients is to minimize the mean squared error (MSE) incurred by the average uncorrupted client maximized over the choice of the component means (subject to a minimum separation assumption) and the actions of the adversary.

We assume that there are $(k+c)m$ clients. Each client is independently corrupted with probability $c/(k+c)$; let $U\subseteq[(k+c)m]$ denote the set of uncorrupted clients. The samples of uncorrupted clients are drawn independently according to $\mathcal{G}$; the samples of the corrupted clients are chosen by the adversarial algorithm, denoted by $\mathcal{A}$, which takes as input the component means $\{\mu_i\}_{i\in[k]}$. Let $X_i$ denote the sample of the $i$-th client. The server's algorithm $\mathcal{S}$ takes the set of all samples and produces an output which is broadcast to the clients. Each uncorrupted client runs an algorithm $\mathcal{C}$ whose input is the client's own local sample and the output of $\mathcal{S}$. Consider the following min-max problem, where the objective is the average MSE of the uncorrupted clients' estimates of their component means:
\begin{align*}
	K(m):=
 \min_\mathcal{S,C}&\max_{\substack{\mathcal{A},\;\{\mu_i\}_{i\in[k]}:\\ |\mu_i-\mu_j|\geq D, i\neq j}}
  \mathbb{E}\Bigg[\frac{1}{|U|}\sum_{i\in U}\Big(\mu_i-\\
  &\qquad\mathcal{C}\left(X_i, \mathcal{S}\left(\left\{X_i\right\}_{i\in[(k+c)m]} \right)\right)\Big)^2\Bigg].
\end{align*}
Notice that in the maximization, the components means are constrained to have a separation of at least $D$.

\section{Results}
\label{sec:MainRes}

Before stating the error bound achieved by our algorithm, we state a lower bound to serve as a benchmark.
\begin{theorem}[Lower Bound]
	For the above mean estimation problem, $K(m)\ge\frac{1}{\sqrt{8\pi}}\frac{c}{k}$ for all $m$, for $c/k<\sqrt{2\pi}$.\label{thm:lowerbound}
\end{theorem}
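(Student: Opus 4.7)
The approach is a relaxation-plus-Le~Cam argument. First I would enlarge the algorithm's information by handing both the server and every client the component label of every sample, including those planted by the adversary. Because this side information can only help the algorithm, the minimax MSE of this relaxed problem lower-bounds $K(m)$, so it suffices to prove the inequality for the relaxed problem. Under the relaxation the problem decouples into $k$ independent one-dimensional Gaussian mean estimation problems with Huber-style contamination: for component $j$ the server sees $\approx m$ genuine samples from $\mathcal{N}(\mu_j,1)$ together with $c_j$ adversarial samples (where $\sum_j c_j\lesssim cm$), and each uncorrupted client in component $j$, knowing its label, simply combines its own $\mathcal{N}(\mu_j,1)$ sample with the server's estimate of $\mu_j$.

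For each single-component subproblem I would run Le~Cam's two-point method between the hypotheses $\mu_j=0$ and $\mu_j=\delta$ (the other component means are kept well-separated from these two values so that the $D$-separation constraint is respected). The adversary picks two distributions $P_{\text{adv}}^{(0)},P_{\text{adv}}^{(\delta)}$ that \emph{equalise} the two observed mixtures
\[
\tfrac{m}{m+c_j}\mathcal{N}(0,1)+\tfrac{c_j}{m+c_j}P_{\text{adv}}^{(0)} \quad\text{and}\quad \tfrac{m}{m+c_j}\mathcal{N}(\delta,1)+\tfrac{c_j}{m+c_j}P_{\text{adv}}^{(\delta)}.
\]
Feasibility of this matching reduces to $\mathrm{TV}(\mathcal{N}(0,1),\mathcal{N}(\delta,1))\le \epsilon_j/(1-\epsilon_j)$ with $\epsilon_j=c_j/(m+c_j)$; using the Gaussian identity $\mathrm{TV}=2\Phi(\delta/2)-1$ and its small-$\delta$ expansion $\delta\cdot\phi(0)=\delta/\sqrt{2\pi}$, this yields the largest admissible gap $\delta(\epsilon_j)$, and Le~Cam then produces a per-client MSE of at least $\delta(\epsilon_j)^2/8$ for every client in any attacked component.

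The last step is to optimise the adversary's allocation $\{c_j\}$ subject to $\sum_j c_j\le cm$ so as to maximise the averaged contribution $\frac{1}{k}\sum_{j \text{ attacked}}\delta(\epsilon_j)^2/8$. The critical observation---the one that produces the linear $c/k$ rate rather than the naive quadratic $(c/k)^2$ one would read off from a single-component attack---is that the adversary should \emph{spread} its budget across roughly $t\approx c(1-\epsilon)/\epsilon$ components, each attacked at a moderate contamination $\epsilon$, so that the averaged MSE scales as $(t/k)\cdot\delta(\epsilon)^2/8$, which is linear in $c/k$. The main obstacle will be calibrating $\epsilon$ (and tracking constants through the exact Gaussian TV expression rather than its linearisation) so that the factor $1/\sqrt{8\pi}$---which combines $\phi(0)=1/\sqrt{2\pi}$ with the Le~Cam factor---emerges cleanly; the side condition $c/k<\sqrt{2\pi}$ should then delineate exactly the regime in which the two-point construction stays within the linear Gaussian-TV regime while still respecting the minimum-separation constraint.
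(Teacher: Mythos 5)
Your first half --- the label relaxation, the observation that it decouples the problem into $k$ one-dimensional contaminated Gaussian problems, and the final budget-allocation step in which the adversary spreads roughly $c/\varepsilon$ components' worth of contamination at a constant level $\varepsilon$ each to get a linear $c/k$ rate --- is exactly the paper's strategy (the paper's Lagrange analysis shows the optimum is to attack $k'=c/\sqrt{2\pi}$ components equally, and the hypothesis $c/k<\sqrt{2\pi}$ is precisely the feasibility condition $k'\le k$, not a ``linear TV regime'' condition). Where you diverge is the per-component lower bound, and this is where the gaps are. First, your claimed per-client risk of $\delta(\epsilon_j)^2/8$ ignores the client's own verified sample. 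That sample is uncorrupted and is distributed as $\mathcal{N}(0,1)$ or $\mathcal{N}(\delta,1)$ under the two hypotheses, so even when the adversary exactly equalises the server-side mixtures, the total variation between the client's full data under the two hypotheses is $2\Phi(\delta/2)-1>0$, and Le~Cam only gives $\tfrac{\delta^2}{8}\bigl(1-(2\Phi(\delta/2)-1)\bigr)=\tfrac{\delta^2}{4}\Phi(-\delta/2)$. This is not a constant-chasing detail: the verified local sample is the entire content of the ``personalised'' formulation, and any lower bound must contend with it.

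Second, and more structurally, a two-point bound cannot reach the constant in the theorem over the whole claimed range. The corrected per-component bound $\tfrac{\delta^2}{4}\Phi(-\delta/2)$ is maximised near $\delta\approx 2.4$ at a value of about $0.17$, whereas the theorem at $c/k$ near $\sqrt{2\pi}$ requires the per-component contribution to reach $1/2$. This is the classic weakness of two-point arguments for a single observation with a bounded parameter: to extract the full difficulty of ``$\theta$ known only to lie in an interval of length $L$, one $\mathcal{N}(\theta,1)$ sample,'' you need a prior spread over the whole interval. The paper does exactly this: its adversary inserts a flat plateau of width $\sqrt{2\pi}\varepsilon_i$ into the Gaussian density, making \emph{every} mean in an interval of length $\sqrt{2\pi}\varepsilon_i$ produce literally the same server-side distribution, and then invokes the Bayesian Cram\'er--Rao (van Trees) bound with the worst-case squared-cosine prior to get $h(\varepsilon_i)=\varepsilon_i^2/(\varepsilon_i^2+2\pi)$, which does reach $1/2$ at $\varepsilon_i=\sqrt{2\pi}$. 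Interestingly, in the small-$c/k$ (spreading) regime your corrected two-point bound appears to give a per-unit-budget yield of about $0.23\,c/k$, slightly \emph{better} than $\tfrac{1}{\sqrt{8\pi}}\tfrac{c}{k}\approx 0.20\,c/k$; so your route is salvageable and even competitive for small $c/k$, but to prove the theorem as stated for all $c/k<\sqrt{2\pi}$ you need to replace the two-point step with an interval-indistinguishability attack plus a Bayesian information bound.
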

Note that, without collaboration, each client incurs an \textcolor{black}{expected} MSE of 1. 
The above lower bound implies that collaboration may help as long as not too many clients are corrupt, and the MSE must worsen at least linearly when the ratio of corrupt clients to genuine clients increases. The proof outline of Theorem \ref{thm:lowerbound} is given in Section \ref{sec:LowerBnd}.
We now state the error bound for our algorithm.
\begin{theorem}[Asymptotic Upper Bound]
\label{thm:upperbound}
    Given an instance of the above mean estimation problem such that the component means have minimum separation $D\ge9\Delta$ and $\Delta>1.5$,
	\begin{align*}
		K_\infty\le\begin{cases}
		    C_0(\Delta)\frac{c}{k}+C_1(\Delta) & \frac{c}{k}>\varepsilon_{cr}\\
            C_2(\Delta)\left(\frac{c}{k}\right)^{2/9} + 2\Phi\left(-\frac{3\Delta}{2}\right)C_3\Delta^2 & \frac{c}{k}\le\varepsilon_{cr}
		\end{cases}
	\end{align*}
	Here, $K_\infty=\lim_{m\rightarrow\infty}K(m)$. $C_0$ is a function of $\Delta$ which is close to 1 and tends to 1 as $\Delta\rightarrow\infty$. $\varepsilon_{cr}$, $C_1,C_2$ are functions of $\Delta$ which are $\ll 1$ and go to 0 as $\Delta\rightarrow\infty$. 
\end{theorem}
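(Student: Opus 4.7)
The plan is to reduce the $k$-component problem to $k$ essentially independent single-component estimation problems, and then separately control the errors from clustering mistakes, the server's mean and quality estimates, and the client's combination rule. First I would exploit the separation hypothesis $D\ge 9\Delta$ to establish a \emph{cluster isolation} lemma: after the filter and cluster stages, each corrupted sample contributes to at most one of the $k$ clusters, and every uncorrupted sample of component $i$ falls in cluster $i$ unless it deviates by more than $3\Delta/2$ from $\mu_i$ (an event of probability at most $2\Phi(-3\Delta/2)$). When misclassification does occur, the induced error per sample is at most on the order of the maximum cluster separation, and this is where the additive term $2\Phi(-3\Delta/2)C_3\Delta^2$ in the upper bound comes from. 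Conditioning on the high-probability correct-clustering event then factorises the problem across the $k$ components.

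Next I would carry out the \emph{single-component analysis} that the bullet points in the contributions section highlight. For the $i$-th component the server sees roughly $m$ genuine samples drawn from $\mathcal{N}(\mu_i,1)$ together with at most an $\varepsilon_i$-fraction of adversarial samples (where $\sum_i \varepsilon_i$ concentrates around $c/k$). Because the filter admits samples only within a data-adaptive window, the adversary's influence on the empirical mean $\hat\mu_i$ is limited to a bias of order $\varepsilon_i$ times the window width, yielding a clean bound of the form $\mathbb{E}[(\hat\mu_i-\mu_i)^2]\le \Psi(\varepsilon_i,\Delta)+O(1/m)$. In parallel, I would show that the server's \emph{quality} estimate $\hat V_i$ tracks this MSE up to a multiplicative constant with high probability; this is the technical heart of the argument since the client's combination rule depends on $\hat V_i$ nonlinearly.

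Given those two server-side bounds, the third step is the \emph{client combination}: client $i$ holds a verified sample $X_i\sim\mathcal{N}(\mu_i,1)$ and forms the convex combination $\alpha X_i+(1-\alpha)\hat\mu_i$ with $\alpha=\hat V_i/(1+\hat V_i)$. The conditional MSE of such an estimate is $\hat V_i/(1+\hat V_i)$, so averaging over the randomness of $\hat V_i$ and plugging in Step~2 gives a per-component MSE that is essentially $\Psi(\varepsilon_i,\Delta)/(1+\Psi(\varepsilon_i,\Delta))$. Averaging over components and using convexity in the right direction gives an expression in $c/k$. In the $c/k>\varepsilon_{cr}$ regime, $\Psi$ is small relative to $1$ only in the sense that the ratio is well approximated by $C_0(\Delta)(c/k)+C_1(\Delta)$, with $C_0\to 1$ as $\Delta\to\infty$. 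In the $c/k\le\varepsilon_{cr}$ regime, the filter width can be tuned to shave off more bias at the cost of rejecting more genuine samples; optimising this polynomial trade-off is what produces the $(c/k)^{2/9}$ exponent, with $C_2(\Delta)\ll 1$.

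The main obstacle I expect is Step~2, specifically proving that the reported quality estimate $\hat V_i$ faithfully tracks the actual (adversary-dependent) error $(\hat\mu_i-\mu_i)^2$ with enough accuracy that the nonlinear combination rule does not introduce uncontrolled slack. A naive worst-case substitution $|\hat V_i-V_i|\le\text{const}$ would break both the linear scaling $C_0(c/k)$ in the high-corruption regime and the $(c/k)^{2/9}$ scaling in the low-corruption regime, so what is needed is a coupled high-probability argument that handles the bias and the quality estimate simultaneously, propagated through the cluster-isolation event. Everything else, including the tail bound for misclassification and the optimisation that produces the $2/9$ exponent, should be standard once that control is in place.
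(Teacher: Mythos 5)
Your three-step skeleton (cluster isolation via $D\ge 9\Delta$, single-component server bounds, convex combination at the client, then optimisation over the adversary's allocation, plus the $2\Phi(-3\Delta/2)\cdot\mathrm{const}\cdot\Delta^2$ tail for a misplaced verified sample) is the same as the paper's. But two of the mechanisms you invoke are not the ones that actually produce the stated bounds, and without them the proof does not close. First, the $(c/k)^{2/9}$ exponent does not come from tuning the filter width: $\Delta$ is fixed throughout the algorithm. It comes from the quality estimate. The server estimates $\varepsilon_i$ by counting points in the cluster's window and subtracting the expected genuine mass, which yields a deterministic one-sided sandwich $\varepsilon_i\le\tilde\varepsilon_i\le\varepsilon_i+f^{-1}(\rho-\varepsilon_i)\varphi(\Delta)$ where $f(x)=\Phi(x+\Delta)-\Phi(x-\Delta)$ and $\rho=f(0)$ (Lemma \ref{lm:epsilon_estimate}). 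This, incidentally, is how the paper disposes of the obstacle you flag as the ``technical heart'': no coupled high-probability tracking of the quality estimate is needed, only the fact that $\tilde\varepsilon_i$ overestimates $\varepsilon_i$ by a controlled amount together with monotonicity of $g$. The price of that overestimate is the slack $l(\varepsilon)=f^{-1}(\rho-\varepsilon)$, which the paper bounds by $(\varepsilon/C)^{1/3}$ near $0$ because $f$ has no linear term at $x=0$. This cube root enters once in $\hat\varepsilon=\varepsilon+l(\varepsilon)\varphi(\Delta)$ and once more nested inside $g$, giving a $(c/k)^{1/9}$ perturbation inside $\Phi^{-1}$ and hence $(c/k)^{2/9}$ after squaring. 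Your proposed polynomial trade-off optimisation has no analogue in the algorithm and would not produce this exponent.

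Second, the linear branch $C_0(\Delta)\frac{c}{k}+C_1(\Delta)$ is not an approximation of the per-component ratio $\Psi/(1+\Psi)$ evaluated at $\varepsilon=c/k$. The paper shows that the one-component bound $\frac{g(\hat\varepsilon)^2}{1+g(\hat\varepsilon)^2}$, viewed as a function of $\varepsilon$, has an inflection point at $\varepsilon_{cr}$ (this is what \emph{defines} $\varepsilon_{cr}$): it is treated as concave below and convex above. Below $\varepsilon_{cr}$ the maximising adversary spreads evenly, giving the $(c/k)^{2/9}$ branch; above it the adversary concentrates, completely destroying the server estimate for a $\Theta(c/k)$ fraction of components (each then falling back on the local sample with MSE $\approx 1$, yielding $C_0(\Delta)\frac{c}{k}$) while the remaining components sit at corruption $\le\varepsilon_{cr}$ and contribute $C_1(\Delta)$. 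Your write-up contains no convexity/concavity analysis of the per-component loss, and that analysis is precisely the step that determines the adversary's optimal allocation and hence the shape of both branches. Two smaller points: the server's estimator is the median over a (generally asymmetric) window rather than a trimmed mean, which is why $g$ carries the extra $\Phi(f^{-1}(\rho-\varepsilon)+\Delta)-\Phi(\Delta)$ correction; and the $O(1/m)$ fluctuations you carry are irrelevant here since the theorem concerns $K_\infty=\lim_{m\to\infty}K(m)$.
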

\begin{figure} 
    \centering
    \subfloat[\label{fig:err_vs_c/k}]{
       \includegraphics[width=0.7\linewidth]{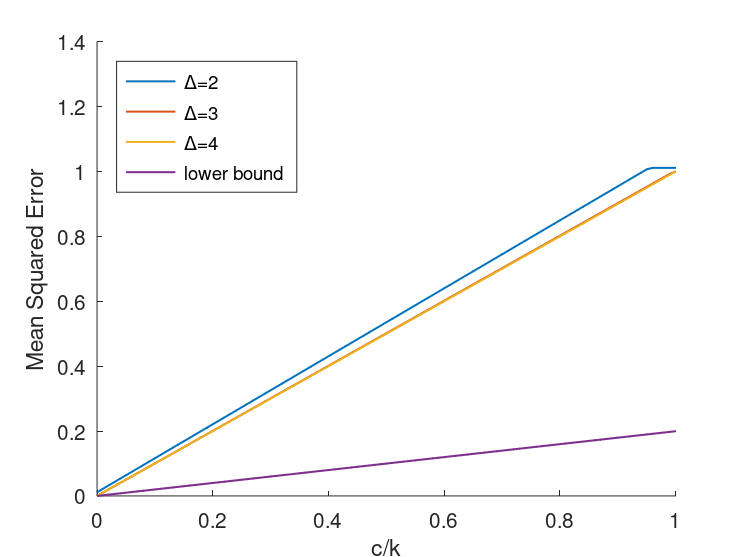}}
    \hfill
    \subfloat[\label{fig:err_vs_delta}]{%
        \includegraphics[width=0.7\linewidth]{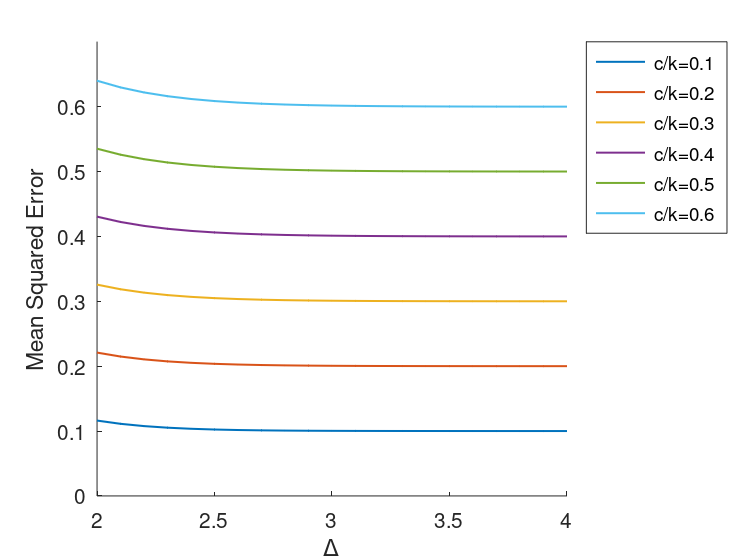}}
    \caption{(a) A plot of our error bound against $c/k$ for various values of $\Delta$. The lower bound is shown in purple. The $\Delta=3$ and $\Delta=4$ curves are very similar and hard to distinguish by eye. 
    (b) A plot of our error bound against $\Delta$ for various values of $c/k$. Note that all the curves are flat for $\Delta>2.5$. Thus, larger $\Delta$ are unnecessary to achieve our upper bound.}
    \label{fig:plots}
\end{figure}
Note that $\varepsilon_{cr}$ is very small and decreases quickly to 0 as $\Delta\rightarrow\infty$ (for $\Delta= 2,3,4$, $\varepsilon_{cr} \approx 8.1\cdot10^{-6}, 1.2\cdot10^{-9},1.4\cdot10^{-14}$ respectively.) 
Thus, for all practical cases of interest, the bound is $C_0(\Delta)\frac{c}{k}+C_1(\Delta)$.
It turns out that $C_1(\Delta)\ll 1$ and $C_0(\Delta)\approx 1$, so our upper bound is approximately $c/k$ which is a factor of $\approx\sqrt{8\pi}$ off from the lower bound, see Figure \ref{fig:err_vs_c/k}.
In Figure \ref{fig:err_vs_delta}, it can be seen that the upper bound's dependence on $\Delta$ quickly approaches its limiting behaviour, there being little difference in the bound for $\Delta\geq 3$.


These results show when collaboration is helpful in the presence of corruptions. Without collaboration, the average client's mean squared error would be 1. With collaboration however, they can achieve an asymptotically smaller error as long as the ratio of corrupted to uncorrupted clients ($c/k$) is less than $1/C_0(\Delta)\approx1$. The lower bound shows that the linear dependence of the MSE with collaboration on $c/k$ is optimal. 

There is still a factor of $\approx\sqrt{8\pi}$ between the upper and lower bounds, which we believe can be improved via better algorithms; the challenge is for it to be tractably analyzable, as even our simpler algorithm is challenging to analyze. 

\section{Lower Bound}
\label{sec:LowerBnd}

To derive a lower bound on $K(m)$, we consider a relaxation of the problem in which the algorithm has additionally access to the component labels of the observations. The labels of the uncorrupted observations are correct, while those of corrupted one are chosen by the adversary. Since the algorithm may ignore the component labels, a lower bound for this relaxation is clearly also a lower bound on $K(m)$. 

This relaxation allows us to consider the components separately and our bound will hold for an arbitrary choice of $\{\mu_i\}_{i\in[k]}$ (specifically, one which satisfies the minimum separation requirement). 
\textcolor{black}{We lower bound $K(m)$ by focusing on a specific adversary (instead of maximising over all adversaries):} for non-negative reals $\varepsilon_i$, $i\in[k]$ which add up to $c$, consider the adversary who selects $\mu_i'\in[\mu_i-\sqrt{2\pi}\varepsilon_i,\mu_i], i\in[k]$ arbitrarily and produces the purported observations of the corrupted clients independently using the following sampling strategy\footnote{Note that this adversary is randomized and the resulting MSE is a lower bound on that of the worst case (maximizing) adversary.}: The label $i\in[k]$ is sampled with probability $\varepsilon_i/c$, and conditioned on label $i$ being sampled, the observation is sampled using the probability density function (p.d.f) $\psi(x)=(f_{\varepsilon_i}(x-\mu'_i)-\varphi(x-\mu_i))/\varepsilon_i$, where $\varphi$ is the standard normal p.d.f. and 
\begin{align*}
    f_{\varepsilon_i}(x)
    =\begin{cases}
		\varphi(x)& x\le 0\\
		\frac{1}{\sqrt{2\pi}}& 0<x\le \sqrt{2\pi}\varepsilon_i\\
		\varphi(x-\sqrt{2\pi}\varepsilon_i)& x> \sqrt{2\pi}\varepsilon_i\\
	\end{cases}
\end{align*}
It is easy to verify that $\psi$ is a p.d.f. if $\mu_i'\in[\mu_i-\sqrt{2\pi}\varepsilon_i,\mu_i]$.
With this, the collection of $(k+c)m$ samples reported to the server can be thought of as drawn from 
the distribution $\frac{1}{k+c} \sum_{i\in[k]} f_{\varepsilon_i}(x-\mu'_i)$.
Hence, the best that the server can hope to learn about component-$i$ is the pair of parameters $(\varepsilon_i,\mu'_i)$. Since $\mu_i'$ was chosen arbitrarily in $[\mu_i-\sqrt{2\pi}\varepsilon_i,\mu_i]$, this amounts to learning that $\mu_i$ lies in $[\mu'_i,\mu'_i+\sqrt{2\pi}\varepsilon_i]$. To obtain a lower bound, assume that the server learns this information. Then, the uncorrupted clients in the $i$-th component (who have their own observation) are solving the following min-max parameter estimation problem (where we assume $\mu'_i=0$ without loss of generality):

Given a single sample $X$ from $\mathcal{N}(\theta,1), \theta\in[0,\sqrt{2\pi}\varepsilon_i]$, minimise the mean squared error
\begin{align*}
	R(\varepsilon_i)=\min_{\hat{\theta}(.)}\max_{\theta\in[0,\sqrt{2\pi}\varepsilon_i]}\mathbb{E}_{X\sim\mathcal{N}(\theta,1)}[(\hat{\theta}(X)-\theta)^2].
\end{align*}
Lower bounding this minimax MSE using the Bayesian MSE for which the worst case prior (i.e., the one which minimises the Fischer information) is known to be the squared cosine distribution \cite{bercher2009minimum}, we obtain
\[ R(\varepsilon_i) \ge \frac{\varepsilon_i^2}{\varepsilon_i^2+2\pi}=:h(\varepsilon_i).\]
Therefore, $K(m)\ge\max_{\{\varepsilon_i\geq 0\}_{i\in[k]}:\sum\varepsilon_i\leq c} \quad\frac{1}{k}\sum_{i\in[k]}h(\varepsilon_i).$
The function $h(\varepsilon)$ is convex for $\varepsilon\leq\sqrt{2\pi/3}$ and concave thereafter. A detailed analysis 
via the Lagrange multiplier method shows that the above expression is maximised when the adversary splits its budget evenly across $c/\sqrt{2\pi}$ components leaving the rest alone,
yielding a lower bound of $\frac{1}{\sqrt{8\pi}}\frac{c}{k}$.

\begin{figure}
    \centering
  \includegraphics[clip=true, trim={130 340 100 330}, width=10cm]{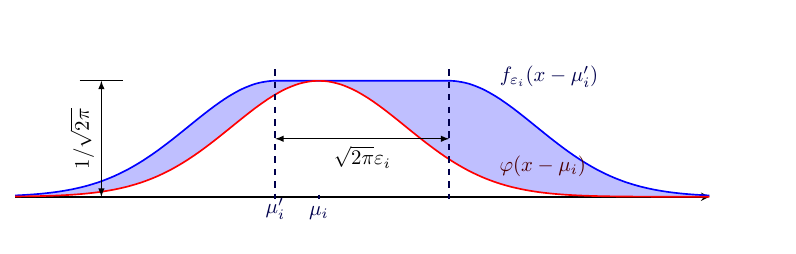}
   \caption{Attack on the $i$-th component. The shaded area is $\varepsilon_i$ and represents the additional mass introduced to this component by the adversary.}
    \label{fig:lowerbound}
\end{figure}

\section{Algorithm and Asymptotic Analysis}
\label{sec:AlgoAnalysis}

The algorithm consists of two subroutines. The first is {\sc Robust Clustering}, which the server uses to compute component mean estimates based on clients' data. The second is {\sc Combine Estimates}, which uncorrupted clients use to combine the server's estimate their own sample and output a final estimate of their component mean.

{\sc Robust Clustering} algorithm works in two phases. In the first phase it filters out `outliers' - those data points which have few other data points near them. Then, the algorithm clusters the filtered points based on their distance from each other. If two points are closer than $3\Delta$, then they belong to the same cluster.
These clusters give the server a rough idea of where the components are.
Since there are more total number of corrupted clients than uncorrupted clients in any one component, it may impossible to accurately locate all clusters. The job of {\sc Robust Clustering} is to force the adversary to place many corrupted data points near a component in order to successfully attack it. It does this by filtering aggressively - if no corrupted clients were present, only points very close to component means will get filtered in. This forces the adversary to concentrate the corrupted data points near a few components, leaving most of the components untouched.
As a result, a small fraction of the clients have to rely almost exclusively on their verified sample (thereby experiencing mean squared error close to 1), while the majority of the clients rely on the high quality estimate of the server (thereby experiencing mean squared error close to 0).
The adversarial strategy discussed in the lower bound also this property, which we have tried to replicate with our algorithm. 

In the second phase, the algorithm draws fresh samples and computes a mean estimate and a quality estimate for each cluster.
The mean estimate is the median of fresh samples in the vicinity of the points in the cluster. (For generalising to higher dimensions, a different robust mean estimator can be used.)
Since not every component will get a good estimate, it is important for the clients to know the quality of the server's estimate for their component's mean, otherwise the some client might rely heavily on corrupted data and thus incur a large error.
The quality estimate is based on counting the number points seen in an interval and comparing it to the number of genuine points expected if a component mean was located in that interval. 

To obtain a final estimate of some client's mean, {\sc Combine Estimates} simply takes a convex combination of the client's sample and the closest mean estimate. (If no mean estimate is close or if the closest cluster is of bad quality, then it uses the client's sample as the final estimate.)
The weights of the convex combination are based on the quality estimate of the closest cluster.
\begin{algorithm}[h]
\caption{Server's algorithm, $\mathcal{S}$}\label{alg:1}
	\begin{algorithmic}[1]
		\Function{Robust Clustering}{$\{ y_i \}_{i\in[n]}$, $k$, $c$, $ \color{red}{\delta}$}
        \State $m_h\leftarrow \frac{2}{\delta^2}\log(\frac{k}{\delta})$    \textcolor{black}{// $\delta$ is a backoff parameter which we asymptotically set to 0.}
        \State $m\leftarrow n/(k+c)$
        \State $m_1=m_2\leftarrow (m-m_h)/2$
	\State Randomly assign each point in      $\{ y_i \}_{i\in[n]}$ to $H$,        $T_1$ or $T_2$ with odds $m_h:m_1:m_2$
	\State $\rho \leftarrow \Phi\left( 
        \Delta \right) - \Phi\left( -\Delta\right)$
	\State $H'\leftarrow\{h\in H\mid$ 
        there are at least $(\rho-\delta) m_1$ points from $T_1$ less than $\Delta$ distance from $h\}$
	\State Let $G=(H',E)$ be a graph with 
        edges defined by $(h_1,h_2)\in E\iff |h_1-h_2|<3\Delta$
		\State $C\leftarrow$ the set of cliques in $G$
		\For{$c\in C$ }
		\State$S_c\leftarrow\{x\in T_2\mid \exists h\in c: |x-h|<\Delta\}$
		\State $l\leftarrow \max(S_c)-\min(S_c)-2\Delta$
		\State $\tilde{m}\leftarrow |S_c|$
		\State $\tilde{\mu}_c\leftarrow \mathrm{median}(S_c)$
		\State $\tilde{\varepsilon}_c \leftarrow \frac{\tilde{m}}{m}-\left(\Phi\left(l+\Delta\right)-\Phi\left(-\Delta\right)\right) + \delta$
		\EndFor
		\State\Return $L\leftarrow\{(\tilde{\mu}_c,\tilde{\varepsilon}_c)\mid c\in C\}$
		\EndFunction
	\end{algorithmic}
\end{algorithm}
\begin{algorithm}[h]
\caption{Client's algorithm, $\mathcal{C}$}\label{alg:2}
	\begin{algorithmic}[1]
		\Function{Combine Estimates}{$L$: output of Robust Clustering, $x_v$: verified sample}
		\State $(\tilde{\mu},\tilde{\varepsilon})\leftarrow \arg\min_{(\tilde{\mu},\tilde{\varepsilon})\in L}|\tilde{\mu}-x_v|$
		\If{$|\tilde{\mu}-x_v|<3\Delta/2$}
			\State\Return$\hat{\mu}\leftarrow\frac{1}{1+g(\tilde{\varepsilon})^2}\tilde{\mu} + \frac{g(\tilde{\varepsilon})^2}{1+g(\tilde{\varepsilon})^2}x_v$
		\Else
			\State\Return $x_v$
		\EndIf
		\EndFunction
	\end{algorithmic}
\end{algorithm}

For the asymptotic analysis:
\textcolor{black}{Adversarial points can either be included in a cluster and bias its mean, or elongate a cluster beyond $3\Delta$ and cause it to be discarded.}
We show that any adversarial point can affect at most one component in either of the two ways.
The fact that $D>9\Delta$ is used to show that any adversarial point close enough to a component to affect it is too far from any other components to affect them.
\begin{lemma}
\label{lm:1}
    For the Robust Clustering algorithm, conditioned on the realisation of the filtering phase, each adversarial point in the estimation phase can affect at most one component.
\end{lemma}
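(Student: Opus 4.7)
The plan is to reduce the lemma to the claim that any sample $x\in T_2$ (adversarial or not) lies in at most one of the sets $\{S_c\}_{c\in C}$ constructed in Algorithm~\ref{alg:1}; once this reduction is in hand, the rest follows because every quantity computed in the estimation phase for a given clique $c$ (namely $S_c$, $\tilde m$, $l$, $\tilde\mu_c$, $\tilde\varepsilon_c$) depends on a $T_2$ sample only through its membership in $S_c$.

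The first step is to prove this containment claim by a triangle inequality argument. Suppose for contradiction $x\in S_{c_1}\cap S_{c_2}$ for two distinct cliques $c_1,c_2\in C$. By the definition of $S_c$ there exist $h_1\in c_1$ and $h_2\in c_2$ with $|x-h_1|<\Delta$ and $|x-h_2|<\Delta$, hence $|h_1-h_2|<2\Delta<3\Delta$, so $(h_1,h_2)\in E$. But in $C$ the cliques partition $H'$ and share no edges (they are the maximal cliques of the one-dimensional interval graph $G$, which coincide here with its connected components), so $h_1$ and $h_2$ must lie in the same element of $C$, contradicting $c_1\neq c_2$.

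The second step is to translate ``affects at most one clique'' into ``affects at most one component'' using the separation $D\ge 9\Delta$. A clique $c$ has diameter at most $3\Delta$ on the real line, so $S_c$ and $\tilde\mu_c$ lie in an interval of width at most $5\Delta$; any client that actually uses $(\tilde\mu_c,\tilde\varepsilon_c)$ in Algorithm~\ref{alg:2} must satisfy $|\tilde\mu_c-x_v|<3\Delta/2$, so the samples of such clients lie in a window of width at most $8\Delta<D$. Under the minimum separation assumption this window meets at most one true component, so altering $(\tilde\mu_c,\tilde\varepsilon_c)$ can change the final estimate of clients in at most one component.

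The main obstacle is essentially a bookkeeping one: one must pin down that $C$ really is a partition of $H'$ whose parts are edge-disjoint in $G$, so that the contradiction in the triangle-inequality step is meaningful; this is clean in the interval-graph setting but deserves to be said explicitly. Once that is fixed, the remainder is a handful of distance inequalities. A minor side case is a clique in $C$ that does not correspond to any true component at all (an artifact of the adversary's placement of $H$ points), but this only strengthens the statement, since then $x$ affects zero components.
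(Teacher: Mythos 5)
Your argument is correct in substance and lands on the same underlying facts as the paper, but the two proofs are framed differently, so a comparison is worth making. The paper's proof is an accounting argument: conditioning on the filtering phase fixes the graph $G$ and hence the set of cliques, and each estimation-phase adversarial point is then charged to exactly one of the quantities $c_1$ (mass spent destroying clusters, i.e.\ landing in non-clique components) or $\varepsilon_i$ (mass biasing the clique for component $i$), yielding the budget constraint $c_1+\sum_i\varepsilon_i\le c(1+o(1))$; the geometric fact that this charging is well defined --- that the sets $S_c$ are pairwise disjoint and each clique is associated with at most one true component --- is asserted rather than proved. Your two steps supply exactly those missing justifications: the triangle-inequality step ($|x-h_1|<\Delta$ and $|x-h_2|<\Delta$ force $|h_1-h_2|<2\Delta<3\Delta$, hence adjacency) gives disjointness of the $S_c$, and your width count ($3\Delta+2\Delta+2\cdot 1.5\Delta=8\Delta<D$) makes explicit the paper's one-line remark that $D\ge 9\Delta$ prevents a single point from influencing two components. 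One caution: your parenthetical claim that the maximal cliques of $G$ coincide with its connected components is false in general, even for interval graphs --- the adversary can build a chain of filtered-in points whose connected component has diameter exceeding $3\Delta$ and is therefore not a clique; these are precisely the ``elongated'' clusters the paper discards and charges to $c_1$. The disjointness you need is the weaker (and correct) statement that the elements of $C$ are connected components of $G$ that happen to be cliques, hence pairwise non-adjacent; with that reading of line 9 of Algorithm~\ref{alg:1}, your contradiction goes through and the proof stands.
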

Next, we focus on any one component $i$. We bound the error in the server's mean estimate  $\tilde{\mu}_i$ of this component in terms of the mass of adversarial points affecting this component, $\varepsilon_i$. Due to Lemma \ref{lm:1}, we know that no adversarial point can affect multiple components, giving us $\sum_{i\in[k]}\varepsilon_i\le c$.
\begin{lemma}
\label{lm:median_error}
    Suppose $\varepsilon_i$ corrupted samples contribute to the median calculation for component $i$, then
    \begin{align*}
        &\mathbb{E}[(\mu_i-\tilde{\mu}_i)^2]\\
        &\le\Phi^{-1}\left(\frac{1}{2}\left(1+\varepsilon_i+\Phi\left(f^{-1}(\rho-\varepsilon_i)+\Delta\right)-\Phi\left(\Delta\right)\right)\right)^2\\
        &=:g(\varepsilon_i)^2
    \end{align*}
    where $\Phi$ is the standard normal c.d.f., $f(x):=\Phi(x+\Delta)-\Phi(x-\Delta)$
    and $\rho:=f(0)$.
\end{lemma}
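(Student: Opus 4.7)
The plan is to isolate component $i$ and bound the asymptotic worst-case shift of its empirical median. As a first step I would characterize the cluster $c$ corresponding to component $i$ after the filtering phase. A candidate centre $h$ at distance $d$ from $\mu_i$ is retained in $H'$ iff at least $(\rho-\delta)m_1$ points of $T_1$ lie within $\Delta$ of $h$. Good component-$i$ samples contribute $m_1 f(d)$ in expectation; samples from other components contribute negligibly because $D\ge 9\Delta$; and the adversary can place at most $\varepsilon_i m_1$ of its $T_1$ mass near $h$. Hence the filter admits $h$ iff $f(d)\ge\rho-\varepsilon_i-\delta$, i.e.\ $|d|\le\xi:=f^{-1}(\rho-\varepsilon_i)$ in the $\delta\to 0$ limit. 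Since $h=\mu_i$ is always retained (it sees $\rho m_1$ good neighbours) and the cluster graph joins points at distance $<3\Delta$, the worst-case one-sided adversarial push produces a cluster containing the interval $[\mu_i,\mu_i+\xi]$, with the symmetric situation for a downward push.

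Second, I would analyse the resulting set $S_c$ and its population median. By construction $S_c$ contains the $T_2$ points within $\Delta$ of the cluster, so in the worst case $S_c\subseteq[\mu_i-\Delta,\mu_i+\xi+\Delta]$. The expected $m_2$-normalised mass of good points in this interval is $\Phi(\xi+\Delta)-\Phi(-\Delta)$, while the adversary contributes an extra $\varepsilon_i$ fraction in $T_2$. To maximise the upward median shift, the adversary concentrates its entire $T_2$ mass above a threshold $\mu_i+\tau$. The population median then satisfies
\[\Phi(\tau)-\Phi(-\Delta) \;=\; \tfrac{1}{2}\bigl(\Phi(\xi+\Delta)-\Phi(-\Delta)+\varepsilon_i\bigr),\]
which rearranges, using $\Phi(-\Delta)=1-\Phi(\Delta)$, to $\Phi(\tau)=\tfrac{1}{2}\bigl(1+\varepsilon_i+\Phi(\xi+\Delta)-\Phi(\Delta)\bigr)$, i.e.\ $\tau=g(\varepsilon_i)$. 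The symmetric downward argument gives the same bound, so the population median lies in $[\mu_i-g(\varepsilon_i),\mu_i+g(\varepsilon_i)]$.

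Third, to pass from the population median to the empirical median $\tilde\mu_i$, I would invoke standard concentration of sample quantiles: a Chernoff bound on $\sum_{j}\mathbf{1}[X_j\le\mu_i+\tau]$ shows that with probability $1-o_m(1)$ the empirical median lies within an $O(1/\sqrt{m})$ window of the population median, which also absorbs the $\delta$ slack in the filter. Taking $m\to\infty$ (and $\delta\to 0$ afterwards) and squaring gives $\mathbb{E}[(\mu_i-\tilde\mu_i)^2]\le g(\varepsilon_i)^2$ as claimed.

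The main obstacle is accounting for the fact that the adversary cannot simultaneously invest its entire $\varepsilon_i$ budget in $T_1$ (to enlarge the cluster) and in $T_2$ (to bias the median), because the $H,T_1,T_2$ split is random and independent of the adversary's choices. My plan is to pessimistically treat the same $\varepsilon_i$ as available in both roles; this is loose but still yields a valid upper bound and matches the expression in the lemma. A secondary subtlety is verifying that the one-sided cluster configuration is worst case over all feasible cluster shapes: this follows because any asymmetric truncation of $S_c$ relative to $\mu_i$ strictly increases the mass imbalance across $\mu_i$ on the side favoured by the adversary, strictly enlarging the median bias, as a direct monotonicity calculation on the above equation confirms.
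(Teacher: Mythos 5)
Your proposal is correct and follows essentially the same route as the paper: you use the filtering condition to bound the worst-case cluster extent by $f^{-1}(\rho-\varepsilon_i)$, take the worst asymmetric truncation interval $(-\Delta,\,f^{-1}(\rho-\varepsilon_i)+\Delta)$ around $\mu_i$, and compute the quantile shift of the median caused by $\varepsilon_i$ one-sided contamination of the truncated Gaussian, which is exactly the paper's calculation. Your added concentration step for the empirical median and your explicit remark about double-counting the $\varepsilon_i$ budget across the two phases are reasonable refinements of details the paper leaves implicit.
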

Note that for a standard problem of Gaussian mean estimation from $\varepsilon$-additive corrupted samples, outputting the median is results in an MSE of $\Phi^{-1}\left(\frac{1}{2}\left(1+\varepsilon_i\right)\right)^2$.
Here we have an extra term $\Phi\left(f^{-1}(\rho-\varepsilon_i)+\Delta\right)-\Phi\left(\Delta\right)$ for the following reason: When {\sc Robust Clustering} calculates the median (as a mean estimate) it only considers data points in an interval containing the mean. This interval is, in general, not symmetric about the mean. This causes the additional term. Increasing $\Delta$ reduces the effect of this skewing of the interval of interest.

We now want to combine the server's estimate and the client's verified sample into a final local mean estimate. In order to give appropriate weights to the two estimates we would like to know $\mathbb{E}[(\mu_i-\tilde{\mu}_i)^2]$. If $\varepsilon_i$ were known this would be easy. Since $\varepsilon_i$ is unknown, we estimate it.
Since we have tight upper and lower bounds on the number of genuine points that can end up in an interval containing a component mean, we can get an estimate for the number of adversarial samples in the interval simply by counting the total number of points in the interval and subtracting the expected number of genuine points. This shows that $\tilde{\varepsilon}_i$ is a good estimate for $\varepsilon_i$.
\begin{lemma}
    \label{lm:epsilon_estimate}
    Suppose $\varepsilon_i$ corrupted samples contribute to the median calculation for component $i$. Let $\tilde{\varepsilon}_i$ be the server's estimate of $\varepsilon_i$ (see algorithm \ref{alg:1}, lines 10-12). Then, $ \varepsilon_i\le\tilde{\varepsilon}_i\le \varepsilon_i + f^{-1}(\rho-\varepsilon_i)\varphi(\Delta)=:\hat{\varepsilon}$.
    
    Here $\varphi$ is the standard normal p.d.f., and $f$ and $\rho$ are as in lemma \ref{lm:median_error}.
\end{lemma}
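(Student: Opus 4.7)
The plan is to argue that $\tilde{\varepsilon}_c$ almost equals $\varepsilon_i$ up to an additive error driven by how much the cluster can spread under the filtering constraint. Concretely, I would decompose $\tilde m /m$ into a concentration term (which the backoff $+\delta$ absorbs), plus a deterministic expected contribution that splits into (a) the genuine mass of component $i$'s distribution on the set $N_c := \bigcup_{h\in c}B(h,\Delta)$, and (b) the adversarial contribution, which equals $\varepsilon_i$ by definition. Once this decomposition is in place, the lemma reduces to showing
\[
\Phi(l+\Delta)-\Phi(-\Delta)\;\le\;P_i(N_c)\;\le\;\Phi(l+\Delta)-\Phi(-\Delta)+f^{-1}(\rho-\varepsilon_i)\varphi(\Delta).
\]

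For the concentration step, I would condition on the filtering phase (so that $N_c$, $l$, and $\varepsilon_i$ become deterministic), observe that the $T_2$ samples are then i.i.d., and apply a Chernoff/Hoeffding bound on $\tilde m$. The choice $m_h=(2/\delta^2)\log(k/\delta)$ together with a union bound over clusters guarantees $|\tilde m/m - \mathbb{E}[\tilde m/m]|\le\delta$ with high probability, so in the $m\to\infty$, $\delta\to 0$ limit the empirical and expected values agree. Lemma~1 lets me drop contributions from other components' genuine samples, since the separation $D\ge 9\Delta$ forbids them from reaching $N_c$.

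For the lower bound on $P_i(N_c)$, I would first argue that the interval $[\min S_c,\max S_c]$ contains $\mu_i$ with $\mu_i$ lying at distance at least $\Delta$ from the nearer endpoint: the filter picks up an $H$-point within $\Delta$ of $\mu_i$ (this is where the standard concentration of $T_1$ counts around $\rho\,m_1$ enters), so some $h\in c$ is within $\Delta$ of $\mu_i$, and since the algorithm inflates each cluster point by another $\Delta$ when defining $S_c$, the point $\mu_i$ is at least $\Delta$ inside the interval on the near side. The minimum mass a standard Gaussian can put on an interval of length $l+2\Delta$ under this constraint is $\Phi(l+\Delta)-\Phi(-\Delta)$, matching exactly the quantity subtracted in the definition of $\tilde{\varepsilon}_c$.

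For the upper bound, I would write the excess $g(l):=P_i(N_c)-[\Phi(l+\Delta)-\Phi(-\Delta)]$ and bound it by $l\,\varphi(\Delta)$ (essentially $g(0)=0$ and $g'\le\varphi(\Delta)$ on the relevant range, since shifting the window toward being centered on $\mu_i$ only increases the mass, and the derivative of a Gaussian CDF is maximized at the mode). The remaining work is to translate this into $f^{-1}(\rho-\varepsilon_i)\,\varphi(\Delta)$ by controlling the cluster spread: any $h\in H'$ must have at least $(\rho-\delta)m_1$ neighbors in $T_1$, and the genuine neighbors from component $i$ contribute at most $f(|h-\mu_i|)$ in probability while the adversary can donate at most $\varepsilon_i$ mass, forcing $f(|h-\mu_i|)\ge \rho-\varepsilon_i-\delta$, i.e.\ $|h-\mu_i|\le f^{-1}(\rho-\varepsilon_i)$ asymptotically. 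This caps $l$ by (a constant times) $f^{-1}(\rho-\varepsilon_i)$ and closes the bound.

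The main obstacle I anticipate is Step 4: the filtering constraint gives an upper bound on the \emph{symmetric} spread of the cluster around $\mu_i$, but the quantity $\Phi(l+\Delta)-\Phi(-\Delta)$ built into the algorithm treats the interval asymmetrically, so getting the constant exactly right (rather than off by a factor of two) will require being careful about which side of $\mu_i$ the cluster extends and in particular exploiting that only \emph{one} tail of the normal contributes any significant extra mass beyond $\Delta$. Everything else is either concentration or monotonicity/derivative estimates for $\Phi$ and $\varphi$.
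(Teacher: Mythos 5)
Your proposal matches the paper's proof in all essentials: decompose the count into genuine mass plus $\varepsilon_i$, lower-bound the genuine mass in an interval of length $l+2\Delta$ containing $\mu_i$ with margin $\Delta$ by the asymmetric extreme $\Phi(l+\Delta)-\Phi(-\Delta)$ (giving $\varepsilon_i\le\tilde{\varepsilon}_i$), and bound the excess using the spread constraint $f(|h-\mu_i|)\ge\rho-\varepsilon_i$. The "factor of two" obstacle you flag is resolved exactly as you suspect: the paper upper-bounds the genuine mass by the symmetric maximum $\Phi(l/2+\Delta)-\Phi(-(l/2+\Delta))$, so the excess is at most $(l/2)\varphi(\Delta)\le f^{-1}(\rho-\varepsilon_i)\varphi(\Delta)$ since $l\le 2f^{-1}(\rho-\varepsilon_i)$.
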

This gives us a bound on the mean squared error of the client's final estimate of its mean, $\hat{\mu}_i$.
\begin{lemma}
    \label{lm:one_comp_error}
    Suppose $\varepsilon_i$ corrupted samples contribute to the median calculation for component $i$, then, conditioned on the client's verified sample $x_v$ falling within $1.5\Delta$ distance to its mean, 
    \begin{align*}
        \mathbb{E}\left[(\hat{\mu}_i-\mu_i)^2\mid(x_v-\mu_i)^2<9/4\Delta^2\right]\le\frac{2g(\tilde{\varepsilon}_i)^2}{1+g(\tilde{\varepsilon}_i)^2}
    \end{align*}
    where $g$ is as defined in lemma \ref{lm:median_error} and $\tilde{\varepsilon}_i$ is the estimate of $\varepsilon_i$. (See Algorithm \ref{alg:1}, lines 10-12.)
\end{lemma}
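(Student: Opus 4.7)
The plan is to decompose the squared error of the convex combination and bound each piece using Lemmas~\ref{lm:median_error} and~\ref{lm:epsilon_estimate}, together with the standard fact that a symmetrically truncated Gaussian has variance at most that of the parent. Let $w:=1/(1+g(\tilde\varepsilon_i)^2)$, so whenever the client enters the convex-combination branch of Algorithm~\ref{alg:2}, $\hat\mu_i - \mu_i = w(\tilde\mu_i - \mu_i) + (1-w)(x_v - \mu_i)$. A preliminary step is to argue that, under the conditioning event $(x_v-\mu_i)^2<9\Delta^2/4$, the client indeed enters this branch. Here I would use the minimum separation $D\ge 9\Delta$ together with the filter/cluster geometry of Algorithm~\ref{alg:1}: the set $S_c$ corresponding to component $i$ lies within a constant multiple of $\Delta$ of $\mu_i$, so $\tilde\mu_i$ is within a small multiple of $\Delta$ of $\mu_i$ while every other $\tilde\mu_j$ is at least $\approx 7\Delta$ from $x_v$. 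Hence $\tilde\mu_i$ is the closest server estimate to $x_v$ and $|\tilde\mu_i-x_v|<3\Delta/2$.

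The heart of the argument is a one-line application of $(a+b)^2\le 2a^2+2b^2$, which sidesteps the cross term $\mathbb{E}[(\tilde\mu_i-\mu_i)(x_v-\mu_i)]$ at the cost of exactly the factor of $2$ appearing in the lemma:
\[
(\hat\mu_i-\mu_i)^2 \;\le\; 2w^2(\tilde\mu_i-\mu_i)^2 + 2(1-w)^2(x_v-\mu_i)^2.
\]
Because the samples the server uses to compute $\tilde\mu_i$ are independent of the client's verified sample $x_v$, taking conditional expectations separates the two terms. Lemma~\ref{lm:median_error} gives $\mathbb{E}[(\tilde\mu_i-\mu_i)^2]\le g(\varepsilon_i)^2$, which I upper bound by $g(\tilde\varepsilon_i)^2$ by noting that $g$ is monotone increasing in its argument (both $1+\varepsilon$ and $\Phi(f^{-1}(\rho-\varepsilon)+\Delta)$ increase with $\varepsilon$, since $f$ is decreasing on $(0,\infty)$, making the argument of $\Phi^{-1}$ increasing) and that Lemma~\ref{lm:epsilon_estimate} yields $\tilde\varepsilon_i\ge\varepsilon_i$. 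The second term is controlled by the fact that an $\mathcal{N}(0,1)$ variable truncated to an interval symmetric about $0$ has variance at most $1$, so $\mathbb{E}[(x_v-\mu_i)^2\mid (x_v-\mu_i)^2<9\Delta^2/4]\le 1$.

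Substituting $w=1/(1+g(\tilde\varepsilon_i)^2)$ into $2w^2 g(\tilde\varepsilon_i)^2 + 2(1-w)^2$ and collecting terms gives
\[
\frac{2g(\tilde\varepsilon_i)^2 + 2g(\tilde\varepsilon_i)^4}{(1+g(\tilde\varepsilon_i)^2)^2} \;=\; \frac{2g(\tilde\varepsilon_i)^2}{1+g(\tilde\varepsilon_i)^2},
\]
which is the claimed bound. The main obstacle I anticipate is the geometric preliminary step: one must quantitatively control how far $\tilde\mu_i$ can drift from $\mu_i$ in the worst case so as to guarantee that the convex-combination branch is always selected under the stated conditioning. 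This is where the assumption $D\ge 9\Delta$ and the precise filter thresholds in Algorithm~\ref{alg:1} are used in tandem; once this branch-selection is settled, everything else is an algebraic consequence of the crude inequality $(a+b)^2\le 2a^2+2b^2$ applied to the specific weights of the convex combination.
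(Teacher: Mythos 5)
Your proof is correct, and it reaches the stated bound by a genuinely different (if only slightly) route from the paper. The paper expands $(\hat\mu_i-\mu_i)^2$ exactly as $w^2(\tilde\mu_i-\mu_i)^2+2w(1-w)(\tilde\mu_i-\mu_i)(x_v-\mu_i)+(1-w)^2(x_v-\mu_i)^2$ and drops the cross term outright (it vanishes because the server's estimate is computed from samples independent of $x_v$, and $x_v-\mu_i$ has zero conditional mean under the symmetric truncation); it then bounds the two remaining terms exactly as you do, arriving at $\frac{g^2+g^4}{(1+g^2)^2}=\frac{g^2}{1+g^2}$ \emph{without} the factor of $2$. Your use of $(a+b)^2\le 2a^2+2b^2$ sidesteps having to justify that the cross term vanishes, at the cost of exactly the factor of $2$ that appears in the lemma statement --- so your argument precisely accounts for where that $2$ could come from, while the paper's exact decomposition shows it is actually slack. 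Your handling of the second moment of the truncated Gaussian and the passage from $g(\varepsilon_i)$ to $g(\tilde\varepsilon_i)$ via monotonicity of $g$ and Lemma~\ref{lm:epsilon_estimate} matches the paper's reasoning. The ``geometric preliminary step'' you flag --- verifying that under the conditioning the client actually selects component $i$'s estimate and enters the convex-combination branch --- is a legitimate concern, but the paper's proof does not address it either; it implicitly assumes the branch is taken, relying on the separation $D\ge 9\Delta$ and the discussion around Lemma~\ref{lm:1}, so you should not regard the absence of a fully worked-out version of that step as a gap relative to the paper.
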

Finally, we calculate the best way for the adversary to spread its budget of $cm$ points among the different components. That is we want to maximise
\[
    \sum_i\mathbb{E}\left[(\hat{\mu}_i-\mu_i)^2\mid(x_v-\mu_i)^2<9/4\Delta^2\right]
\]
subject to $\sum_i\varepsilon_i\le c$.
The adversary's strategy depends on whether $c/k$ is larger or smaller than a critical value $\varepsilon_{cr}$, which is the inflection point of  $\frac{g(\hat{\varepsilon}_i)^2}{1+g(\hat{\varepsilon}_i)^2}$ - the one-component MSE bound obtained by combining Lemmas \ref{lm:epsilon_estimate} and \ref{lm:one_comp_error}.
\begin{itemize}
    \item When $c/k$ is smaller than $\varepsilon_{cr}$, the best adversarial strategy is to attack all components equally (i.e. $\varepsilon_i=c/k$ for all $i$). In this case, all clients see the same MSE, which is at most $\frac{g(\tilde{\varepsilon}_i)^2}{1+g(\tilde{\varepsilon}_i)^2}$. To show its behavior near 0, we further upper bound this function by $C_2(\Delta)\left(\frac{c}{k}\right)^{2/9}$.
    \item Otherwise, the best strategy is to spread a small amount of mass equally among all components and use the rest to focus on just a few components. For roughly $c/k$ fraction of the components, the server's mean estimate is rendered useless resulting in an MSE of 1 for these components while the rest of the components are mostly unaffected incurring a very small MSE.
    Note that this is similar to the adversarial strategy in the lower bound; the difference being in how efficiently the adversary can use their corrupted samples. Our filtration step is  simple, and there could be more sophisticated filtration schemes with better performance; we considered several, but the challenge is in being able to give a complete analysis.
    Even this seemingly simple scheme requires a non-trivial analysis.
\end{itemize}
In the event that the client's verified sample falls far from its mean, we give an upper bound on its MSE which is proportional to $\Delta^2$. Since the probability of this event is exponentially (in $\Delta$) small, this gives a good bound on the MSE in this case.


\section{Discussion}
\label{sec:Disc}
We focused on a concrete problem of clients with samples from a cluster chosen from a scalar Gaussian mixture model. The natural heterogeneity arose from the different clusters chosen, and adversaries could arbitrarily corrupt interactions from clients. Even this seemingly simple case required innovation in robust filtering algorithms, their analysis and an appropriate lower bound. Using these, we were able to show that despite corruption, collaboration did better than using one's own local individual samples for personalized mean estimation. There are several open questions, even for this formulation, which are part of ongoing investigations. These include, analysis in the finite regime (including scaling), efficient high-dimensional estimation, and extensions with unknown and heterogeneous variances of the mixture components. There are also broader questions on how this can be extended to general distributions and higher dimensions with efficient algorithms. 

\printbibliography
\clearpage
\newpage
\appendices
\section{Lower Bound Proof}
From Section \ref{sec:LowerBnd}, we know that $K(m)$ is lower bounded by
\begin{align*}
	\max_{\{\varepsilon_i\geq 0\}_{i\in[k]}:\sum\varepsilon_i\leq c} \quad\frac{1}{k}\sum_{i\in[k]}h(\varepsilon_i).
\end{align*}
The function $h(\varepsilon)$ is convex for $\varepsilon\leq\sqrt{2\pi/3}$ and concave thereafter.
Informally, the best strategy for the adversary would be to bring as many cliques close to the inflection point as possible. Thus, the final lower bound on $K(m)$ would be proportional to $c/k$. Let us confirm this:
We want to find
\begin{align*}
    \max_{\varepsilon_i\in\mathbb{R}} \min_{\alpha,\beta_i\in\mathbb{R}^+} \sum_{i\in[k]}\frac{\varepsilon_i^2}{\varepsilon_i^2+2\pi}+\alpha\left(c-\sum_{i\in[k]}\varepsilon_i\right)+\sum_{i\in[k]}\beta_i\varepsilon_i
\end{align*}
Taking derivative w.r.t $\varepsilon_i$ gives
\begin{align*}
	\frac{d}{d\varepsilon_i} \bigg(\cdots \bigg)={}& \frac{4\pi\varepsilon_i} {\left(\varepsilon_i^2+2\pi\right)^2} - \alpha + \beta_i\\
	={}& \frac{1}{(\varepsilon_i^2+2\pi)^2}\left[(\beta_i-\alpha)(\varepsilon_i^2+2\pi)^2+4\pi\varepsilon_i\right]
\end{align*}
Setting this to 0 gives
\begin{align*}
	4\pi\varepsilon_i={}&(\alpha-\beta_i)(\varepsilon_i^2+2\pi)^2
\end{align*}
Thus, $\varepsilon_i=0\implies \beta_i=\alpha$. Furthermore, due to complementary slackness, $\varepsilon_i>0\implies\beta_i=0$.\\\\
In the optimal solution, some of the $\varepsilon_i$'s will be non-zero. w.l.o.g., let these be $\varepsilon_1,\ldots,\varepsilon_{k'}$ for some $k'\le k$. The other $\varepsilon_i$'s are 0. Then, we will have
\begin{align*}
	\frac{4\pi\varepsilon_i}{(\varepsilon_i^2+2\pi)^2}={}&h'(\varepsilon_i)=\alpha&\forall i\in[k']\\
	\varepsilon_i={}&0&\forall i\in[k]\backslash[k']
\end{align*}
Thus, all non-zero $\varepsilon_i$'s have the same $h'$.\\
Now, the function $h'(\varepsilon)$, for $\varepsilon\ge0$, is a 1-hump function with a maximum at $\varepsilon=\sqrt{2\pi/3}$. So, the equation $h'(\varepsilon)=\alpha$ has 2 solutions. Call them $h'^{-1}_-(\alpha)$ and $g'^{-1}_+(\alpha)$.\\
Suppose we have a stationary point with some $\alpha$, $\varepsilon_1=h'^{-1}_-(\alpha),\varepsilon_2=h'^{-1}_+(\alpha)$ (And there may be some other non-zero $\varepsilon_i$'s among $3\le i\le k$ which may be equal to $h'^{-1}_-(\alpha)$ or $h'^{-1}_-(\alpha)$).\\
We claim that this is not a local maximum. Specifically, we claim that the Hessian in the direction $[-1,1,0,\ldots,0]$ is positive.\\
That is,
\begin{align*}
	h''(h'^{-1}_+(\alpha))+h''(h'^{-1}_-(\alpha))>{}&0\\
	\iff4\pi\left[\frac{2\pi-3h'^{-1}_+(\alpha)}{(2\pi+h'^{-1}_+(\alpha)^2)^3} + \frac{2\pi-3h'^{-1}_-(\alpha)}{(2\pi+h'^{-1}_-(\alpha)^2)^3}\right]>{}&0
\end{align*}
We can numerically confirm this. Then the only local maxima are with $\varepsilon_i=h'^{-1}_+(\alpha)\ \forall i\in[k']$. Since all non-zero $\varepsilon_i$'s are equal, they must all be equal to $\frac{c}{k'}$. The total error is lower bounded by
\begin{align*}
	K(m)\ge{}& \frac{1}{k}\sum_{i\in[k]}h(\varepsilon_i)\\
	={}& \frac{1}{k}\sum_{i\in[k']}h\left(\frac{c}{k'}\right)\\
	={}& \frac{k'}{k}h\left(\frac{c}{k'}\right)\\
	={}& \frac{k'}{k}\frac{(c/k')^2}{(c/k')^2+2\pi}\\
	={}& \frac{1}{k}\frac{c^2k'}{c^2+2\pi k'^2}
\end{align*}
The algorithm will pick the best $k'$. So we get a lower bound of
\begin{align*}
	K(m)\ge{}& \frac{1}{k}\max_{k'\in\mathbb{Z^+}}\frac{c^2k'}{c^2+2\pi k'^2}\\
	\ge{}& \frac{1}{k}\max_{k'\in\mathbb{R}^+}\frac{c^2k'}{c^2+2\pi k'^2}\\
	={}& \frac{}{\sqrt{8\pi}}\frac{c}{k}
\end{align*}
Note that the maximum occurs at $k'=c/\sqrt{2\pi}$. Since we work with $c<\sqrt{2\pi}k$ the constraint $k'\le k$ is satisfied,
yielding a lower bound of $\frac{1}{\sqrt{8\pi}}\frac{c}{k}$.
\section{Algorithm analysis}
\begin{proof}[Proof of lemma \ref{lm:1}]
    To give an MSE bound for this algorithm, we need to consider the worst case scenario in terms of the adversary's choices (i.e. maximising the MSE over adversary's choices). The adversary can choose a configuration of $\{\mu_i\}_{i\in[k]}$ and location of corrupted points $\{z_j\}_{j\in[cm]}$. To make the maximisation tractable, we want to associate with each configuration $\{\mu_i\},\{z_j\}$, numbers $c_1$ and $\{\varepsilon_i\}_{i\in[k]}$ such that the algorithm's error can be bounded in terms of $c_1,\{\varepsilon_i\}$. Thus, we can reduce the maximisation over all of adversary's choices to just (constrained) maximisation over $c_1,\{\varepsilon_i\}$. Informally, $c_1$ is the mass of points used by the adversary to destroy clusters. $\varepsilon_i$ is the mass of points used by the adversary to shift the mean estimate of the $i$-th component\\
In particular, for each configuration of $\{\mu_i\},\{z_j\}$, we give numbers $c_1,\{\varepsilon_i\}$, such that, with high probability
\begin{enumerate}
	\item $c_1+\sum_i\varepsilon_i\le c(1+o(1))$.
	\item At most $\frac{c_1}{\tilde{\rho}}$ components are non-cliques.
	\item There are at most $\varepsilon_i$ adversarial points contributing to the $i$-th component, if it forms a clique.
\end{enumerate}
The algorithm is given a collection of $(k+c)m$ points, $cm$ of which are adversarial, in some order.\\
The algorithm will randomly assign each point to phase 1 with probability $\lambda$ independently of the other points. Let the number of genuine points assigned to phases 1 and 2 be $km_1$ and $km_2$ respectively and the number of adversarial points assigned to phases 1 and 2 be $cm_1'$ and $cm_2'$ respectively.\\
Thus, $m_1+m_2=m_1'+m_2'=m$, $\mathbb{E}[m_1]=\mathbb{E}[m_1']=\lambda m$, and $\mathbb{E}[m_2]=\mathbb{E}[m_2']=(1-\lambda)m$.\\\\
We want $c_1,\{\varepsilon_i\}_{i\in[k]}$ to depend only on what happens in phase 1 and not on the realisation of genuine points from phase 2.
Conditioned on the phase assignment of the adversarial points and on the realisation of the phase 1 genuine points, the outcome of phase 1 - that is, the graph $G$ the algorithm comes up with - is determined. Then, we can simply count how many phase 2 adversarial points are part of each clique/non-clique. We count them under $\varepsilon_i$ if they are part of a clique corresponding to component $i$ and under $c_1$ if they are part of some non-clique. This satisfies conditions 1 and 3.\\\\
$c_1$ is defined as (number of adversarial points in non-cliques)/$(1-\lambda)m$.\\
Similarly, $\varepsilon_i$ is defined as (number of adversarial points in the clique for component $i$)/$(1-\lambda)m$.\\
Clearly,
\begin{align*}
	c_1+\sum_i\varepsilon_i\le c\frac{m_2'}{(1-\lambda)m}=c\frac{m_2'}{\mathbb{E}[m_2']}
\end{align*}
As $m\rightarrow\infty$, $\frac{m_2'}{\mathbb{E}[m_2']}\le 1+\delta$ with high probability for any small $\delta>0$. Thus, condition 1 is satisfied.
\end{proof}


\begin{proof}[Proof of lemma \ref{lm:median_error}]
Define $f(x):= \Phi(x+\Delta)-\Phi(x-\Delta)$.
Note that $f(x)$ measures the expected mass of genuine samples within $\Delta$ distance of a point $x$ units from the mean of a component.
Suppose the adversary has placed a sample at $x$. For it to be filtered in the total number of points within $\Delta$ units of it must be $(\rho-\delta)m$. Since the number of genuine samples is a little over $f(x)m$, the number of adversarial points needed is $(\rho-f(x)-\delta)m$. (In the large sample limit $m\rightarrow\infty$, $\delta\rightarrow0$)
To put it another way, if the adversary attacks a component with $\varepsilon m$ samples, the furthest point from the mean that can get filtered in is $f^{-1}(\rho-\varepsilon)$ units away.

Suppose the clique has $\varepsilon_i m$ adversarial points and $\alpha_i m$ genuine points. If $\varepsilon_i<\alpha_i$, the median is bounded between the $\left( \frac{1}{2} - \frac{\varepsilon_i}{2\alpha_i} \right)$-th and $\left( \frac{1}{2} + \frac{\varepsilon_i}{2\alpha_i} \right)$-th percentile of the genuine points.

Suppose the adversary wants to shift the mean to the right, then the worst truncation interval is $(-\Delta,f^{-1}(\rho-\varepsilon)+\Delta)$.The $\left( \frac{1}{2} + \frac{\varepsilon_i}{2\alpha_i} \right)$-th percentile of such a distribution is
\begin{align*}
	&\sigma\Phi^{-1}\left(\Phi(-\Delta/\sigma) + \left(\frac{1}{2}+\frac{\varepsilon_i}{2\alpha_i}\right)\alpha_i\right)\\
	\le{}&\sigma\Phi^{-1}\left(\Phi(-\Delta/\sigma) +\frac{\varepsilon_i}{2} +\frac{1}{2}\left( \Phi\left(\frac{f^{-1}(\rho-\varepsilon_i)+\Delta}{\sigma}\right)\right.\right.\\
    &\qquad\left.\left.-\Phi\left(-\frac{\Delta}{\sigma}\right)\right)\right)\\
	={}& \sigma\Phi^{-1}\left(\frac{1}{2}+\frac{\varepsilon_i}{2}+\frac{1}{2}\left(\Phi\left(\frac{f^{-1}(\rho-\varepsilon_i)+\Delta}{\sigma}\right)-\Phi\left(\frac{\Delta}{\sigma}\right)\right)\right)
\end{align*}
\end{proof}

\begin{proof}[Proof of lemma \ref{lm:epsilon_estimate}]
    Let the length of the interval over which we are taking the median be $l+2\Delta$ and the number of points contributing to the median be $\tilde{m}$. We know that the number of genuine points in the interval is between $\Phi\left(l/2+\Delta\right) - \Phi\left(-(l/2+\Delta)\right)$ and $\Phi\left(l+\Delta\right)-\Phi(-\Delta)$. Thus,
\begin{align*}
	\tilde{m}-m\left(\Phi\left(l+\Delta\right)-\Phi\left(-\Delta\right)\right)\ge \varepsilon m\\
    \ge \tilde{m}-m\left( \Phi\left(l/2+\Delta\right) - \Phi\left(-(l/2+\Delta)\right)\right)
\end{align*}
We also know
\begin{align*}
	\varepsilon\ge{}& \rho-f(l/2)\\
	\implies l\le{}& 2f^{-1}(\rho-\varepsilon)
\end{align*}
A conservative estimate of $\varepsilon$ is
\begin{align*}
	\hat{\varepsilon}={}& \frac{\tilde{m}}{m}-\left(\Phi\left(l+\Delta\right)-\Phi\left(-\Delta\right)\right)
\end{align*}
This gives,
\begin{align*}
	\varepsilon\le \hat{\varepsilon}={}& \frac{\tilde{m}}{m}-\left(\Phi\left(l+\Delta\right)-\Phi\left(-\Delta\right)\right)\\
	\le{}& \varepsilon - \left(\Phi\left(l+\Delta\right) - \Phi\left(-\Delta\right)\right) + \left(\Phi\left(l/2+\Delta\right) - \Phi\left(-(l/2+\Delta)\right)\right)\\
	={}& \varepsilon - \left(\Phi\left(l+\Delta\right)-\Phi\left(l/2+\Delta\right)\right) - \left(\Phi\left(-(l/2+\Delta\right)-\Phi\left(-\Delta\right)\right)\\
	={}& \varepsilon - \left(\Phi\left(l+\Delta\right)-\Phi\left(l/2+\Delta\right)\right) + \left(\Phi\left(l/2+\Delta\right)-\Phi\left(\Delta\right)\right)\\
	\le{}& \varepsilon + \frac{1}{\sqrt{2\pi}}l\left(e^{-\frac{\Delta^2}{2}} - e^{-\frac{(l+\Delta)^2}{2}}\right)\\
	\le{}& \varepsilon +  \frac{l}{\sqrt{8\pi}\sigma} e^{-\frac{\Delta^2}{2\sigma^2}}\\
    \le{}& \varepsilon + f^{-1}(\rho-\varepsilon)\varphi(\Delta)
\end{align*}
\end{proof}
\begin{proof}[Proof of lemma \ref{lm:one_comp_error}]
    \begin{align*}
	&\mathbb{E}\left[(\hat{\mu}_i-\mu_i)^2\mid(x_v-\mu_i)^2<9/4\Delta^2\right]\\
    ={}& \frac{1}{(1+g(\tilde{\varepsilon})^2)^2}\mathbb{E}\left[(\bar{\mu}_i-\mu_i)^2\right] \\
    &\quad+\frac{g(\tilde{\varepsilon})^4}{(1+g(\tilde{\varepsilon})^2)^2}\mathbb{E}\left[(x_v-\mu_i)^2\mid(x_v-\mu_i)^2<9/4\Delta^2\right]\\
    \le{}&\frac{1}{(1+g(\tilde{\varepsilon})^2)^2}\mathbb{E}\left[(\bar{\mu}_i-\mu_i)^2\right] \\
    &\quad+\frac{g(\tilde{\varepsilon})^4}{(1+g(\tilde{\varepsilon})^2)^2}\mathbb{E}\left[(x_v-\mu_i)^2\right]\\
    ={}& \frac{g(\varepsilon)^2 + g(\hat{\varepsilon})^4}{(1+g(\hat{\varepsilon})^2)^2}\\
    \le{}& \frac{g(\hat{\varepsilon})^2 + g(\hat{\varepsilon})^4}{(1+g(\hat{\varepsilon})^2)^2}\\
	={}& \frac{g(\hat{\varepsilon})^2}{1+g(\hat{\varepsilon})^2}
\end{align*}
\end{proof}
Finally, we calculate the best way for the adversary to spread its budget of $cm$ points among the different components.
The adversary's strategy depends on whether $c/k$ is larger or smaller than a critical value $\varepsilon_{cr}$, which is the inflection point of  $\frac{g(\hat{\varepsilon}_i)^2}{1+g(\hat{\varepsilon}_i)^2}$ - the one-component MSE bound obtained by combining Lemmas \ref{lm:epsilon_estimate} and \ref{lm:one_comp_error}. 
\begin{itemize}
    \item When $c/k$ is smaller than $\varepsilon_{cr}$, the best adversarial strategy is to attack all components equally (i.e. $\varepsilon_i=c/k$ for all $i$). In this case, all clients see the same MSE, which is at most $\frac{g(\tilde{\varepsilon}_i)^2}{1+g(\tilde{\varepsilon}_i)^2}$. To show its behavior near 0, we further upper bound this function by $C_2(\Delta)\left(\frac{c}{k}\right)^{2/9}$.
    \item Otherwise, the best strategy is to spread a small amount of mass equally among all components and use the rest to focus on just a few components. For roughly $c/k$ fraction of the components, the server's mean estimate is rendered useless resulting in an MSE of 1 for these components while the rest of the components are mostly unaffected incurring a very small MSE.
    Note that this is similar to the adversarial strategy in the lower bound; the difference being in how efficiently the adversary can use their corrupted samples. Our filtration step is extremely simple, and as such more susceptible to corruptions. A more sophisticated filtration scheme may perform better.
\end{itemize}
Let us now prove this.
\begin{lemma}
	The function $\frac{g(\hat{\varepsilon})^2}{1+g(\hat{\varepsilon})^2}$ is convex for
    $\varepsilon>\varepsilon_{cr}$ where $\varepsilon_{cr}$ is the positive zero of $(1-g(\hat{\varepsilon})^2)^2 + (1+g(\hat{\varepsilon})^2)  \frac{g(\hat{\varepsilon})}{g'(\hat{\varepsilon})} \Delta l'(\hat{\varepsilon}) \left(\frac{l'(\varepsilon)}{\hat{\varepsilon}'^2}\varphi\left(\Delta\right) \left(\frac{l(\varepsilon)}{\Delta} - \coth(l(\varepsilon)\Delta) \right)\right.$ $\left. -  \frac{e^{-l(\hat{\varepsilon})\Delta}}{\sinh(l(\tilde{\varepsilon})\Delta)}\right)$.
	
	Here,
	\begin{align*}
		f(x):={}&\Phi\left(x+\Delta\right)-\Phi\left(x-\Delta\right)\\
		\rho:={}&f(0)\\
		l(\varepsilon):={}& f^{-1}(\rho-\varepsilon)\\
		g(\varepsilon):={}&\Phi^{-1}\left(\frac{1}{2}\left(1+\varepsilon+\left[\Phi\left(l(\varepsilon)+\Delta\right)-\Phi\left(\Delta\right)\right]\right)\right)\\
		\hat{\varepsilon}:={}& \varepsilon + \varphi(\Delta)l(\varepsilon)
	\end{align*}
\end{lemma}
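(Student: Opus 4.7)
The plan is a direct second-derivative computation followed by simplification using closed forms for $g'$, $g''$, $l'$, and $l''$. Write $v(\varepsilon) := g(\hat{\varepsilon}(\varepsilon))$, so $F(\varepsilon) = v^2/(1+v^2)$. The identity $(x^2/(1+x^2))'' = (2-6x^2)/(1+x^2)^3$ together with the chain rule gives
\begin{equation*}
F''(\varepsilon) \;=\; \frac{2\,[(1-3v^2)(v')^2 \;+\; v\,v''\,(1+v^2)]}{(1+v^2)^3}.
\end{equation*}
Since $v' = g'(\hat{\varepsilon})\,\hat{\varepsilon}'$ with both factors positive on the range of interest, convexity of $F$ reduces to non-negativity of $(1-3v^2)(v')^2 + v\,v''(1+v^2)$.

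Next, I would derive a compact closed form for $g'$. Differentiating the defining relation $2\Phi(g(y)) = 1 + y + \Phi(l(y)+\Delta) - \Phi(\Delta)$ in $y$ and simplifying via the Gaussian ratio identity $\varphi(l+\Delta)/\varphi(l-\Delta) = e^{-2l\Delta}$ together with $l'(y) = -1/f'(l(y))$ collapses the algebra to
\begin{equation*}
g'(y) \;=\; \frac{e^{l(y)\Delta}}{4\,\varphi(g(y))\,\sinh(l(y)\Delta)}.
\end{equation*}
Logarithmic differentiation of this expression yields $g''(y) = g(y)\,g'(y)^2 - \Delta\,l'(y)\,g'(y)\,e^{-l(y)\Delta}/\sinh(l(y)\Delta)$. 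Differentiating $f'(l)\,l' = -1$ a second time and using $f''(x)/f'(x) = -x + \Delta\coth(x\Delta)$ (a consequence of the same ratio identity) produces
\begin{equation*}
l''(y) \;=\; (l'(y))^2\,\big(l(y) - \Delta\coth(l(y)\Delta)\big),
\end{equation*}
and since $\hat{\varepsilon}'' = \varphi(\Delta)\,l''(\varepsilon)$ this controls $\hat{\varepsilon}''/(\hat{\varepsilon}')^2$ directly.

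Expanding $v'' = g''(\hat{\varepsilon})(\hat{\varepsilon}')^2 + g'(\hat{\varepsilon})\,\hat{\varepsilon}''$ in the target inequality and substituting the formula for $g''$, the $g\,(g')^2$ piece combines with $(1-3v^2)(v')^2$ through the algebraic identity
\begin{equation*}
(1-3g^2) + (1+g^2)\,g^2 \;=\; (1-g^2)^2,
\end{equation*}
producing the perfect square that appears in the statement. After dividing by the positive quantity $g'(\hat{\varepsilon})^2\,(\hat{\varepsilon}')^2$ and substituting $\hat{\varepsilon}''/(\hat{\varepsilon}')^2 = \varphi(\Delta)(l'(\varepsilon))^2(l(\varepsilon) - \Delta\coth(l(\varepsilon)\Delta))/(\hat{\varepsilon}')^2$, the resulting inequality is exactly the nonnegativity of the expression displayed in the lemma, and $\varepsilon_{cr}$ is defined as its positive zero.

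The main obstacle is the algebraic bookkeeping. The crux is recognizing the cancellation $(1-3g^2) + (1+g^2)g^2 = (1-g^2)^2$ inside a two-level chain rule; without it the condition would appear messier and its sign would be obscured. A secondary subtlety is keeping the arguments straight, since $g$ and its derivatives are evaluated at $\hat{\varepsilon}$ while the derivatives of $\hat{\varepsilon}$ itself involve $l$, $l'$, $l''$ at $\varepsilon$; confusing these evaluation points would yield the wrong final form. Once the closed form for $g'$ is obtained, the remaining steps are mechanical but lengthy.
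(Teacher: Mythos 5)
Your proposal follows essentially the same route as the paper's proof: the same closed forms for $l'$, $g'$, $g''$ and $l''$ (the paper writes $g''=g'[\Delta l'+gg'-\coth(\Delta l)\Delta l']$, which is exactly your $g''=g(g')^2-\Delta l'\,g'\,e^{-l\Delta}/\sinh(l\Delta)$), the same reduction of $F''\ge 0$ to $(1+v^2)(v'^2+vv'')-4v^2v'^2\ge 0$, and the same $(1-g^2)^2$ cancellation, with the evaluation points $\hat{\varepsilon}$ versus $\varepsilon$ tracked as in the paper. The only piece you gloss over is the last step --- arguing that the displayed sign expression is positive for \emph{all} $\varepsilon>\varepsilon_{cr}$ rather than merely changing sign at $\varepsilon_{cr}$ --- which the paper handles by noting that $(1-g(\hat{\varepsilon})^2)^2\ge 0$ while $\frac{l}{\Delta}-\coth(l\Delta)$ is increasing without bound and $-e^{-l\Delta}/\sinh(l\Delta)$ is increasing toward $0$.
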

\begin{proof}
First, note that
    \begin{align*}
		l'(\varepsilon)={}& \frac{-1}{f'(l(\varepsilon))}\\
		={}& \frac{1}{\varphi\left(l(\varepsilon)-\Delta \right) - \varphi\left( l(\varepsilon)+\Delta \right)}\\
		={}& \frac{\sqrt{2\pi}} {e^{-\frac{(l(\varepsilon)-\Delta)^2}{2}} - e^{-\frac{(l(\varepsilon)+\Delta)^2}{2}}}\\
		={}& \frac{\sqrt{2\pi} e^{\frac{l(\varepsilon)^2+\Delta^2}{2}}}{2\sinh(l(\varepsilon)\Delta)}
	\end{align*}
 and
    \begin{align*}
		g'(\varepsilon)={}& \frac{1}{2\varphi(g(\varepsilon))}\left(1+\varphi\left(l(\varepsilon)+\Delta\right)l'(\varepsilon)\right)\\
		={}&\frac{1}{2\varphi(g(\varepsilon))} \left(1+\varphi\left(l(\varepsilon)+\Delta\right)\frac{1}{\varphi\left(l(\varepsilon)-\Delta \right) - \varphi\left(l(\varepsilon)+\Delta\right)} \right)\\
		={}&\frac{1}{2\varphi(g(\varepsilon))}\frac{\varphi\left(l(\varepsilon)-\Delta\right)}{\varphi\left(l(\varepsilon)-\Delta \right) - \varphi\left(l(\varepsilon)+\Delta\right)}\\
		={}&\frac{1}{2\varphi(g(\varepsilon))}\frac{1}{1-e^{-2l(\varepsilon)\Delta}}\\
		={}& \frac{e^{l(\varepsilon)\Delta}} {4\varphi(g(\varepsilon))\sinh(l(\varepsilon)\Delta)}
	\end{align*}
    \begin{align*}
		g''(\varepsilon)={}& g'(\varepsilon)\left[\Delta l'(\varepsilon) - \frac{\varphi'(g(\varepsilon))g'(\varepsilon)}{\varphi(g(\varepsilon))} - \coth(\Delta l(\varepsilon))\frac{\Delta l'(\varepsilon)}{\sigma^2}\right]\\
		={}& g'(\varepsilon)\left[\Delta l'(\varepsilon) + g(\varepsilon)g'(\varepsilon) - \coth(\Delta l(\varepsilon))\Delta l'(\varepsilon)\right]
	\end{align*}
 Now,
    \begin{align*}
		\frac{d}{d\varepsilon}\left[\frac{g(\tilde{\varepsilon})^2}{1+g(\tilde{\varepsilon})^2}\right]	={}& \frac{d}{d(g^2)}\left[1-\frac{1}{1+g(\tilde{\varepsilon})^2}\right]\cdot(g(\tilde{\varepsilon})^2)'\\
		={}& \frac{(g(\tilde{\varepsilon})^2)'}{(1+g(\tilde{\varepsilon})^2)^2}\\
		\frac{d}{d\varepsilon}\frac{(g(\tilde{\varepsilon})^2)'}{(1+g(\tilde{\varepsilon})^2)^2}	={}&  \frac{(1+g(\tilde{\varepsilon})^2)(g(\tilde{\varepsilon})^2)''-2(g(\tilde{\varepsilon})^2)'^2}{(1+g(\tilde{\varepsilon})^2)^3}
	\end{align*}
 This has the same sign as
	\begin{align*}
		&(1/2)(1+g(\tilde{\varepsilon})^2)(g(\tilde{\varepsilon})^2)''-(g(\tilde{\varepsilon})^2)'^2\\
		={}& (1+g(\tilde{\varepsilon})^2) ((g'(\tilde{\varepsilon})^2+g(\tilde{\varepsilon})g''(\tilde{\varepsilon})) \tilde{\varepsilon}'^2 + g(\tilde{\varepsilon})g'(\tilde{\varepsilon})\tilde{\varepsilon}'')\\
        &- (2g(\tilde{\varepsilon}) g'(\tilde{\varepsilon})\tilde{\varepsilon}')^2\\
		={}&g'(\tilde{\varepsilon})^2\tilde{\varepsilon}'^2\left[(1+g(\tilde{\varepsilon})^2)\left(1+\frac{g(\tilde{\varepsilon})g''(\tilde{\varepsilon})}{g'(\tilde{\varepsilon})^2}+\frac{g(\tilde{\varepsilon})}{g'(\tilde{\varepsilon})}\frac{\tilde{\varepsilon}''}{\tilde{\varepsilon}'^2}\right) - 4g(\tilde{\varepsilon})^2\right]
	\end{align*}
 	This has the same sign as
	
 	\begin{align*}
		&(1+g(\hat{\varepsilon})^2)\left(1+\frac{g(\hat{\varepsilon})g''(\hat{\varepsilon})}{g'(\hat{\varepsilon})^2} +\frac{g(\hat{\varepsilon})}{g'(\hat{\varepsilon})} \frac{\hat{\varepsilon}''}{\hat{\varepsilon}'^2} \right) - 4g(\hat{\varepsilon})^2\\
		={}&(1+g(\hat{\varepsilon})^2)\left(1+\frac{g(\hat{\varepsilon})}{g'(\hat{\varepsilon})}\bigg(\Delta l'(\hat{\varepsilon}) + g(\hat{\varepsilon}) g'(\hat{\varepsilon}) \bigg. \right.\\
        &\left. \bigg. - \coth(\Delta l(\hat{\varepsilon}))\Delta l'(\hat{\varepsilon})\bigg) + \frac{g(\hat{\varepsilon})}{g'(\hat{\varepsilon})}\frac{\hat{\varepsilon}''}{\hat{\varepsilon}'^2}\right) - 4g(\hat{\varepsilon})^2\\
		={}& (1+g(\hat{\varepsilon})^2) \left(1 + g(\hat{\varepsilon})^2\right.\\
        & \left.+  \frac{g(\hat{\varepsilon})}{g'(\hat{\varepsilon})}\left( \Delta l'(\hat{\varepsilon}) - \coth(\Delta l(\hat{\varepsilon})) \Delta l'(\hat{\varepsilon}) + \frac{\hat{\varepsilon}''}{\hat{\varepsilon}'^2}\right)\right) - 4g(\hat{\varepsilon})^2\\
		={}&(1-g(\hat{\varepsilon})^2)^2 \\
        &\!\!\!\!+(1+g(\hat{\varepsilon})^2) \left( \frac{g(\hat{\varepsilon})}{g'(\hat{\varepsilon})} \left( \Delta l'(\hat{\varepsilon}) - \coth(\Delta l(\hat{\varepsilon})) \Delta l'(\hat{\varepsilon}) + \frac{\hat{\varepsilon}''}{\hat{\varepsilon}'^2}\right)\right)\\
		={}& (1-g(\hat{\varepsilon})^2)^2\\
        &+ (1+g(\hat{\varepsilon})^2) \left( \frac{g(\hat{\varepsilon})}{g'(\hat{\varepsilon})} \left( \Delta l'(\hat{\varepsilon})\left(1 - \coth(\Delta l(\hat{\varepsilon}))\right) + \frac{\hat{\varepsilon}''}{\hat{\varepsilon}'^2}\right)\right)\\
		={}& (1-g(\hat{\varepsilon})^2)^2\\
        &+ (1+g(\hat{\varepsilon})^2) \left( \frac{g(\hat{\varepsilon})}{g'(\hat{\varepsilon})} \left( \frac{\hat{\varepsilon}''}{\hat{\varepsilon}'^2} - \Delta l'(\hat{\varepsilon}) \frac{2e^{-l(\hat{\varepsilon})\Delta}}{e^{l(\hat{\varepsilon})\Delta} - e^{-l(\hat{\varepsilon})\Delta}} \right)\right)\\
		={}& (1-g(\hat{\varepsilon})^2)^2\\
        &+ (1+g(\hat{\varepsilon})^2) \left( \frac{g(\hat{\varepsilon})}{g'(\hat{\varepsilon})} \left( \frac{l'(\varepsilon)^2\Delta}{\hat{\varepsilon}'^2}\varphi\left(\Delta\right) \left(\frac{l(\varepsilon)}{\Delta} - \coth(l(\varepsilon)\Delta) \right)\right.\right.\\
        &\left.\left.- \Delta l'(\hat{\varepsilon}) \frac{e^{-l(\hat{\varepsilon})\Delta}}{\sinh(l(\hat{\varepsilon})\Delta)} \right)\right)\\
		={}& (1-g(\hat{\varepsilon})^2)^2+ (1+g(\hat{\varepsilon})^2)  \frac{g(\hat{\varepsilon})}{g'(\hat{\varepsilon})} \Delta l'(\hat{\varepsilon})\\
        &\cdot\left(\frac{l'(\varepsilon)}{\hat{\varepsilon}'^2}\varphi\left(\Delta\right) \left(\frac{l(\varepsilon)}{\Delta} - \coth(l(\varepsilon)\Delta) \right) -  \frac{e^{-l(\hat{\varepsilon})\Delta}}{\sinh(l(\hat{\varepsilon})\Delta)}\right) 
	\end{align*}
    The first term $(1-g(\tilde{\varepsilon})^2)^2$ is clearly non-negative; the term $\frac{l(\varepsilon)}{\Delta} - \coth(l(\varepsilon)\Delta)$ is increasing without bound; and the term  $ -  \frac{e^{-l(\tilde{\varepsilon})\Delta}}{\sinh(l(\tilde{\varepsilon}\Delta))}$ is increasing, and asymptotically tending to $0$. 
	
	Therefore, there exists $0<\varepsilon_{cr}<\rho$ such that for $\varepsilon>\varepsilon_{cr}$, the above function and hence the second derivative of $\frac{g(\tilde{\varepsilon})^2}{1+g(\tilde{\varepsilon})^2}$ is positive.
    Numerical calculations show that this $\varepsilon_{cr}$ is very small - for $\Delta= 2,3,4$, $\varepsilon_{cr} \approx 8.1\cdot10^{-6}, 1.2\cdot10^{-9},1.4\cdot10^{-14}$ respectively.
\end{proof}
When the loss function is concave, the adversary's best strategy is to split their points evenly across all components. Conversely, when the loss function is convex, the best strategy is to invest in as few components as possible. (Note that, investing more than $\rho-f(3\Delta)$ points in any one component does not make sense for the adversary).
Thus, when $c/k<\varepsilon_{cr}$, adversary will invest evenly across all clusters and the overall MSE will be bounded by 
\begin{align*}
    \lim_{m\rightarrow\infty} K(m) \le{}& \frac{g\left(\frac{c}{k} + l\left(\frac{c}{k} \right) \varphi(\Delta)\right)^2}{1+g\left(\frac{c}{k}+ l\left(\frac{c}{k} \right)\varphi(\Delta)\right)^2}
\end{align*}
We can bound this further by first studying the behaviour of $l(\varepsilon)$ for small $\varepsilon$. This is done in the following
\begin{lemma}
    $l(\varepsilon)\le \sqrt[3]{\varepsilon/C}$ for all $\varepsilon\le\varepsilon_{cr}$, where $C= \frac{\varepsilon_{cr}} {f^{-1}(\rho-\varepsilon_{cr})^3}$.
\end{lemma}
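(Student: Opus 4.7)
Denote $x_0 := l(\varepsilon_{cr})$ and let $h(x):=\rho-f(x)$, so that $\varepsilon=h(l(\varepsilon))$ and $\varepsilon_{cr}=h(x_0)$. The plan is to deduce the cube-root bound as a corollary of the \emph{convexity of} $\varepsilon\mapsto l(\varepsilon)^3$ on $[0,\varepsilon_{cr}]$. Once convexity is in hand, the fact that $l(0)=0$ forces the graph of $l^3$ to lie under the chord through $(0,0)$ and $(\varepsilon_{cr},l(\varepsilon_{cr})^3)$; writing $\varepsilon=t\,\varepsilon_{cr}$ with $t\in[0,1]$, convexity yields $l(\varepsilon)^3\le t\,l(\varepsilon_{cr})^3=(\varepsilon/\varepsilon_{cr})\,l(\varepsilon_{cr})^3=\varepsilon/C$, which is the claim.

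To verify convexity of $l^3$, apply the inverse function theorem to $\varepsilon=h(l(\varepsilon))$ to obtain $l'=1/h'(l)$ and $l''=-h''(l)/h'(l)^3$, and substitute into $(l^3)''=6l(l')^2+3l^2l''$:
\[
(l^3)''(\varepsilon) \;=\; \frac{3\,l(\varepsilon)}{h'(l(\varepsilon))^3}\,\bigl[\,2\,h'(l(\varepsilon)) \,-\, l(\varepsilon)\,h''(l(\varepsilon))\,\bigr].
\]
Since $l(\varepsilon)>0$ and $h'(l(\varepsilon))>0$ on the interior, $(l^3)''\ge 0$ is equivalent to $2h'(x)\ge x\,h''(x)$ for all $x\in(0,x_0]$. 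Inserting the closed forms $h'(x)=\varphi(x-\Delta)-\varphi(x+\Delta)$ and $h''(x)=(x+\Delta)\varphi(x+\Delta)-(x-\Delta)\varphi(x-\Delta)$, factoring out the positive common factor $\sqrt{2/\pi}\,e^{-(x^2+\Delta^2)/2}$, and collecting the exponentials into hyperbolic functions, this reduces to the scalar inequality
\[
(2+x^2)\tanh(x\Delta) \;\ge\; x\Delta,
\]
which in turn is implied by the elementary fact that $\tanh(u)\ge u/2$ on $[0,u^\star]$ with $u^\star\approx 1.915$ (the positive root of $\tanh(u)=u/2$).

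The main technical obstacle is to confirm that the entire interval $(0,x_0]$ lies inside the region where this $\tanh$ inequality applies, i.e.\ that $x_0\Delta \le u^\star$. I would handle this by combining two ingredients: (i) the second-order Taylor bound $h(x)\ge \Delta\varphi(\Delta)\,x^2\bigl(1-O(x^2)\bigr)$, which yields the explicit estimate $x_0 \le (1+o(1))\,\sqrt{\varepsilon_{cr}/(\Delta\varphi(\Delta))}$; and (ii) the super-exponential smallness of $\varepsilon_{cr}$ recorded immediately after Theorem~\ref{thm:upperbound} (e.g.\ $\varepsilon_{cr}\le 10^{-5}$ for $\Delta\ge 2$, and much smaller for larger $\Delta$). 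Together these give $x_0\Delta\ll u^\star$ throughout the admissible range $\Delta>1.5$, making the verification a routine numerical check and closing the argument.
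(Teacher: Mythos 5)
Your proposal is correct, and it reaches the bound by a genuinely different route than the paper. The paper's proof is a comparison-function argument: it Taylor-expands $f$ about $0$ (so $\rho-f(x)=\Delta\varphi(\Delta)x^2+\mathcal{O}(x^4)$), introduces the cubic $\rho-Cx^3$ with $C=\varepsilon_{cr}/l(\varepsilon_{cr})^3$, asserts that $f(x)\le\rho-Cx^3$ on all of $[0,l(\varepsilon_{cr})]$, and inverts; the crucial assertion --- equivalently, that $(\rho-f(x))/x^3$ attains its minimum over $(0,l(\varepsilon_{cr})]$ at the right endpoint --- is left essentially unverified beyond the local expansion. You instead prove the stronger statement that $\varepsilon\mapsto l(\varepsilon)^3$ is convex on $(0,\varepsilon_{cr})$ and read off the claim from the chord through $(0,0)$ and $(\varepsilon_{cr},l(\varepsilon_{cr})^3)$; note that your chord inequality $l(\varepsilon)^3\le(\varepsilon/\varepsilon_{cr})\,l(\varepsilon_{cr})^3$ is literally the paper's asserted comparison $f(x)\le\rho-Cx^3$ rewritten, so the two arguments certify the same geometric fact, but yours actually proves it. Your calculus is right: with $h=\rho-f$ one indeed gets $(l^3)''=\tfrac{3l}{h'(l)^3}\bigl[2h'(l)-l\,h''(l)\bigr]$, and factoring $\sqrt{2/\pi}\,e^{-(x^2+\Delta^2)/2}\cosh(x\Delta)$ out of $2h'(x)-xh''(x)$ does reduce positivity to $(2+x^2)\tanh(x\Delta)\ge x\Delta$, which follows from $\tanh(u)\ge u/2$ on $[0,u^\star]$. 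What each approach buys: the paper's is two lines but hides the work; yours is longer and still ends in a numerical verification, but that verification ($l(\varepsilon_{cr})\Delta\le u^\star\approx1.915$) is extremely robust --- with $l(\varepsilon_{cr})\approx\sqrt{\varepsilon_{cr}/(\Delta\varphi(\Delta))}$ and the reported values of $\varepsilon_{cr}$ one gets $l(\varepsilon_{cr})\Delta\approx0.017$ already at $\Delta=2$, with orders of magnitude to spare --- and is no worse than the paper's own reliance on numerics to locate $\varepsilon_{cr}$ in the first place. The only caveats worth flagging are cosmetic: you reuse the symbol $h$, which the paper has already assigned to two other functions ($\varepsilon^2/(\varepsilon^2+2\pi)$ in the lower bound and $\rho-Cx^3$ in this very proof), and your check for the sliver $\Delta\in(1.5,2)$ rests on the monotone trend of $\varepsilon_{cr}$ rather than a reported value; a one-line a priori bound on $\varepsilon_{cr}$ there (or simply extending the numerical table) would close that.
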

\begin{proof}
    Since we are considering the small $\varepsilon$ regime, consider first the Talylor expansion of $f(x)$ near $x=0$.
    \begin{align*}
        f(x) ={}& \Phi(x+\Delta)-\Phi(x-\Delta)\\
        ={}& \rho +2\varphi'(x+\Delta)x^2+ \mathcal{O}(x^4)
    \end{align*}
    So, if we define $h(x):=\rho-Cx^3$, then for small enough $x$, $f(x)\le h(x)$.
    Let $x_{cr}=l(\varepsilon_{cr})$.
    Choose $C$ such that $\forall x\in [0,x_{cr}]$, $f(x)\le h(x)\implies \rho-f(x)\ge \rho-h(x)$.
    Then, $f^{-1}(\rho-\varepsilon)\le h^{-1}(\rho-\varepsilon)$ for all $\varepsilon\in[0,\varepsilon_{cr}]$.
\end{proof}
This results in the following bound, using $l(c/k)\le \sqrt[3]{c/kC}$ and $c/k\le \varepsilon_{cr}^{2/3}\sqrt[3]{c/k}$:
\begin{align*}
    &\lim_{m\rightarrow\infty} K(m)\\
    \le{}& \frac{g\left(\frac{c}{k} + l\left(\frac{c}{k} \right) \varphi(\Delta)\right)^2}{1+g\left(\frac{c}{k}+ l\left(\frac{c}{k} \right)\varphi(\Delta)\right)^2}\\
    \le{}& \Phi^{-1}\left(\frac{1}{2}\left(1 + \frac{c}{k} + l\left(\frac{c}{k}\right)\varphi(\Delta) + l\left(\frac{c}{k} + l\left(\frac{c}{k}\right)\varphi(\Delta)\right)\varphi(\Delta)\right)\right)^2\\
    \le{}&\Phi^{-1}\left(\frac{1}{2}\left(1 + \left(\varepsilon_{cr}^{2/3}+\frac{\varphi(\Delta)}{\sqrt[3]{C}}\right)\sqrt[3]{\frac{c}{k}}\right.\right.\\
    &\left.\left.+ l\left(\left(\varepsilon_{cr}^{2/3}+\frac{\varphi(\Delta)}{\sqrt[3]{C}}\right)\sqrt[3]{\frac{c}{k}}\right)\varphi(\Delta)\right)\right)^2\\
    \le{}&\Phi^{-1}\left(\frac{1}{2}\left(1 + \left(\varepsilon_{cr}^{2/3}+\frac{\varphi(\Delta)}{\sqrt[3]{C}}\right)\varepsilon_{cr}^{2/9}\sqrt[9]{\frac{c}{k}}\right.\right.\\
    &\left.\left.+ \frac{\varphi(\Delta)}{\sqrt[3]{C}}\left(\varepsilon_{cr}^{2/3}+\frac{\varphi(\Delta)}{\sqrt[3]{C}}\right)^{1/3}\sqrt[9]{\frac{c}{k}}\right)\right)^2\\
    =:{}&\Phi^{-1}\left(\frac{1}{2}+\bar{C}(\Delta)\sqrt[9]{\frac{c}{k}}\right)^2\\
    \le{}& \left(\frac{\bar{C}(\Delta)\sqrt[9]{\frac{c}{k}}}{\varphi\left(\bar{C}(\Delta)\sqrt[9]{\varepsilon_{cr}}\right)}\right)^2\\
    ={}&\frac{\bar{C}(\Delta)^2}{\varphi\left(\bar{C}(\Delta)\sqrt[9]{\varepsilon_{cr}}\right)^2}\left(\frac{c}{k}\right)^{2/9} =: C_2(\Delta)\left(\frac{c}{k}\right)^{2/9}
\end{align*}
where $\bar{C}(\Delta):=\left(\varepsilon_{cr}^{2/3}+\frac{\varphi(\Delta)}{\sqrt[3]{C}}\right)\frac{\varepsilon_{cr}^{2/9}}{2}+\frac{\varphi(\Delta)}{2\sqrt[3]{C}}\left(\varepsilon_{cr}^{2/3}+\frac{\varphi(\Delta)}{\sqrt[3]{C}}\right)^{1/3}$ and $C_2(\Delta):= \frac{\bar{C}(\Delta)^2}{\varphi\left(\bar{C}(\Delta)\sqrt[9]{\varepsilon_{cr}}\right)^2}$.

When $c/k>\varepsilon_{cr}$, we will have $\varepsilon_i\le\varepsilon_{cr}$ for some $i$ and $\varepsilon_i>\varepsilon_{cr}$ for some other $i$. The contribution from the former is at most $\frac{g(\hat{\varepsilon}_{cr})^2}{1+g(\hat{\varepsilon_{cr}})^2}$.
For the latter, since we are now in the convex regime, the best strategy is to focus on a few components.
Note that investing $f^{-1}(\rho-3\Delta)m$ points in any component is enough to render that component's mean estimate useless.
So, $\frac{c}{f^{-1}(\rho-3\Delta)}$ many components will incur an error of 1 while the rest them will have an error of at most $\frac{g(\hat{\varepsilon}_{cr})^2}{1+g(\hat{\varepsilon_{cr}})^2}$. This brings the final error bound to
\begin{align*}
    \lim_{m\rightarrow\infty}K(m)\le{}& \frac{1}{f^{-1}(\rho-3\Delta)}\frac{c}{k} + \frac{g(\hat{\varepsilon}_{cr})^2}{1+g(\hat{\varepsilon_{cr}})^2}
\end{align*}
for the case $c/k>\varepsilon_{cr}$.

In the event that the client's verified sample $x_v$ falls far from its mean, we give an upper bound on its MSE which is proportional to $\Delta^2$. Since the probability of this event is exponentially (in $\Delta$) small, this gives a good bound on the MSE in this case. The issue is that if the verified sample falls close to a different component, the client might trust that component's mean estimate as if it's their own. However, even when this happens, the foreign component's mean estimate cannot be more than $1.5\Delta$ units away from $x_v$. Thus,
\begin{align*}
	&\mathbb{E}[|x_v-\mu_i|^2\mid (x_v-\mu_i)^2\ge9/4\Delta^2]\\
    \le{}& \frac{\int_{1.5\Delta}^{\infty} (x+1.5\Delta)^2 e^{-\frac{x^2}{2}}dx} {\int_{1.5\Delta}^{\infty}e^{-\frac{x^2}{2}}dx}\\
	={}& \frac{\int_{1.5\Delta}^{\infty} x^2 e^{-\frac{x^2}{2}}dx + \int_{1.5\Delta}^{\infty} 3x\Delta e^{-\frac{x^2}{2}}dx + \int_{1.5\Delta}^{\infty} (1.5\Delta)^2 e^{-\frac{x^2}{2}}dx} {\int_{1.5\Delta}^{\infty} e^{-\frac{x^2}{2}}dx}\\
	={}& \frac{-\left[xe^{-\frac{x^2}{2}}\right]_{1.5\Delta}^{\infty} + \int_{1.5\Delta}^{\infty} 3x\Delta e^{-\frac{x^2}{2}}dx}{\int_{1.5\Delta}^{\infty} e^{-\frac{x^2}{2}}dx} + 1+2.25\Delta^2\\
    ={}& \frac{-\left[(x+3\Delta)e^{-\frac{x^2}{2}}\right]_{1.5\Delta}^{\infty}}{\int_{1.5\Delta}^{\infty} e^{-\frac{x^2}{2}}dx} + 1+2.25\Delta^2\\
    \le{}& 3.25\Delta^2
\end{align*}

\end{document}